\newtheorem{lemma}{Lemma}
\newtheorem{theorem}{Theorem}
\newtheorem*{proof}{Proof}
\newtheorem{corollary}{Corollary}
\title{Revisiting Graph Convolutional Network on Semi-Supervised Node Classification from an Optimization Perspective}
\author{
%    Authors
%     All authors must be in the same font size and format.
%    Written by AAAI Press Staff\textsuperscript{\rm 1}\thanks{With help from the AAAI Publications Committee.}\\
%    AAAI Style Contributions by Pater Patel Schneider,
    Hongwei Zhang\textsuperscript{\rm 1},
    Tijin Yan\textsuperscript{\rm 1},
    Zejun Xie\textsuperscript{\rm 2},
    Yuanqing Xia\textsuperscript{\rm 1}*,
   Yuan Zhang\textsuperscript{\rm 1},
    \\
}
\begin{document}
%\linenumbers 
\maketitle
\pdfoutput=1
\begin{abstract}

Graph convolutional networks (GCNs) have achieved promising performance on various graph-based tasks. However they suffer from over-smoothing when stacking more layers. In this paper, we present a quantitative study on this observation and develop novel insights towards the deeper GCN. First, we interpret the current graph convolutional operations from an optimization perspective and argue that over-smoothing is mainly caused by the naive first-order approximation of the solution to the optimization problem. Subsequently, we introduce two metrics to measure the over-smoothing on node-level tasks. Specifically, we calculate the fraction of the pairwise distance between connected and disconnected nodes to the overall distance respectively. Based on our theoretical and empirical analysis, we establish a universal theoretical framework of GCN from an optimization perspective and derive a novel convolutional kernel named GCN+ which has lower parameter amount while relieving the over-smoothing inherently. Extensive experiments on real-world datasets demonstrate the superior performance of GCN+ over state-of-the-art baseline methods on the node classification tasks.
%propose two graph smoothness metrics to measure the over-smoothing on node-level task.
\end{abstract}

\section{Introduction}
Graphs are ubiquitous in the real world, which can easily express various and complex relationships between objectives. In recent years, extensive studies have been conducted on deep learning methods for graph-structured data. There are several approaches on analyzing the graph, including network embedding \cite{perozzi2014deepwalk, tang2015line, grover2016node2vec}, which only uses the graph structure, and graph neural networks (GNNs), which consider graph structure and node features simultaneously. GNNs have shown powerful ability on modeling the graph-structured data in a variety of graph learning tasks such as node classification \cite{2018Large, hamilton2017inductive, yang2016revisiting, kipf2016semi, velivckovic2017graph, wu2019simplifying}, link prediction \cite{zhang2017weisfeiler, zhang2018link, cai2020multi} and graph classification \cite{gilmer2017neural, lee2019self, ma2019graph, xu2018powerful, ying2018hierarchical, zhang2018end}. GNNs have also been applied to a range of applications, including social analysis \cite{qiu2018deepinf, li2019encoding}, recommender systems \cite{ying2018graph, monti2017geometric}, traffic prediction \cite{guo2019attention, li2019predicting}, drug discovery \cite{zitnik2017predicting} and fraud detection \cite{liu2019geniepath}. 

GNNs usually have different design paradigms, which include the spectral graph convolutional networks \cite{bruna2013spectral, defferrard2016convolutional, kipf2016semi},  message passing framework \cite{gilmer2017neural, hamilton2017inductive}, and neighbor aggregation via recurrent neural networks \cite{li2015gated, dai2018learning}. By using the idea of message passing framework, GNNs are to design various graph convolutional layers to update each node representation by aggregating the node representations from its neighbors. 

However, most GNNs only consider the immediate neighbors for each node, which impedes their ability to extract the information of high-order neighbors. More layers usually lead to the performance degradation, which is caused by over-fitting and over-smoothing, of which the former is due to the increasing number of parameters when fitting a limited dataset whereas the latter is the inherent issue of the graph learning. How to make use of the high-order information of neighbors as well as achieving better performance remains a challenge. We need more insights to understand what GCN does and why over-smoothing occurs.
\begin{figure*}[t]
	\centering  
	\subfigure{
		\includegraphics[width=0.12\textwidth]{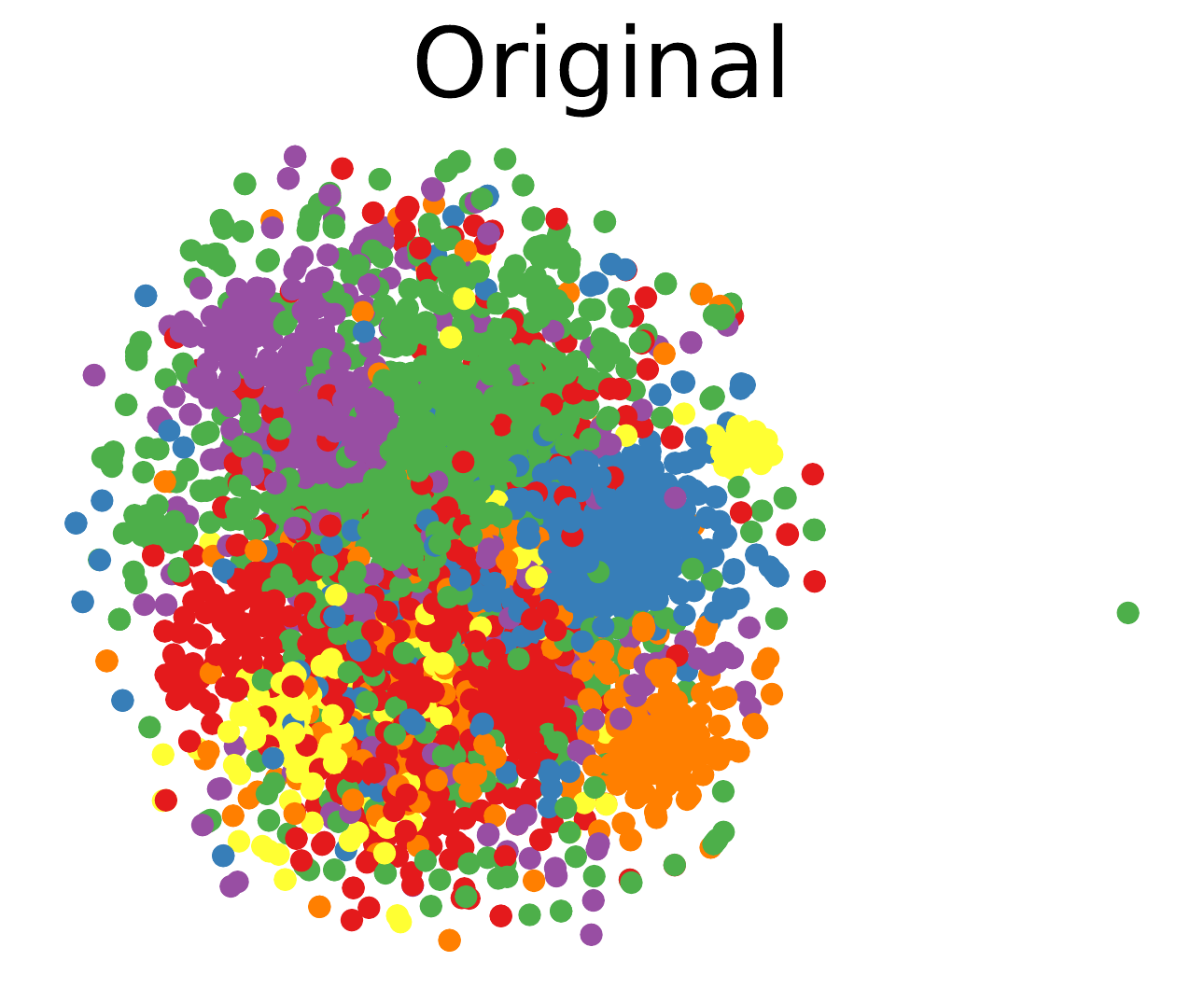}}\hspace{-4mm}
	\subfigure{
		\includegraphics[width=0.12\textwidth]{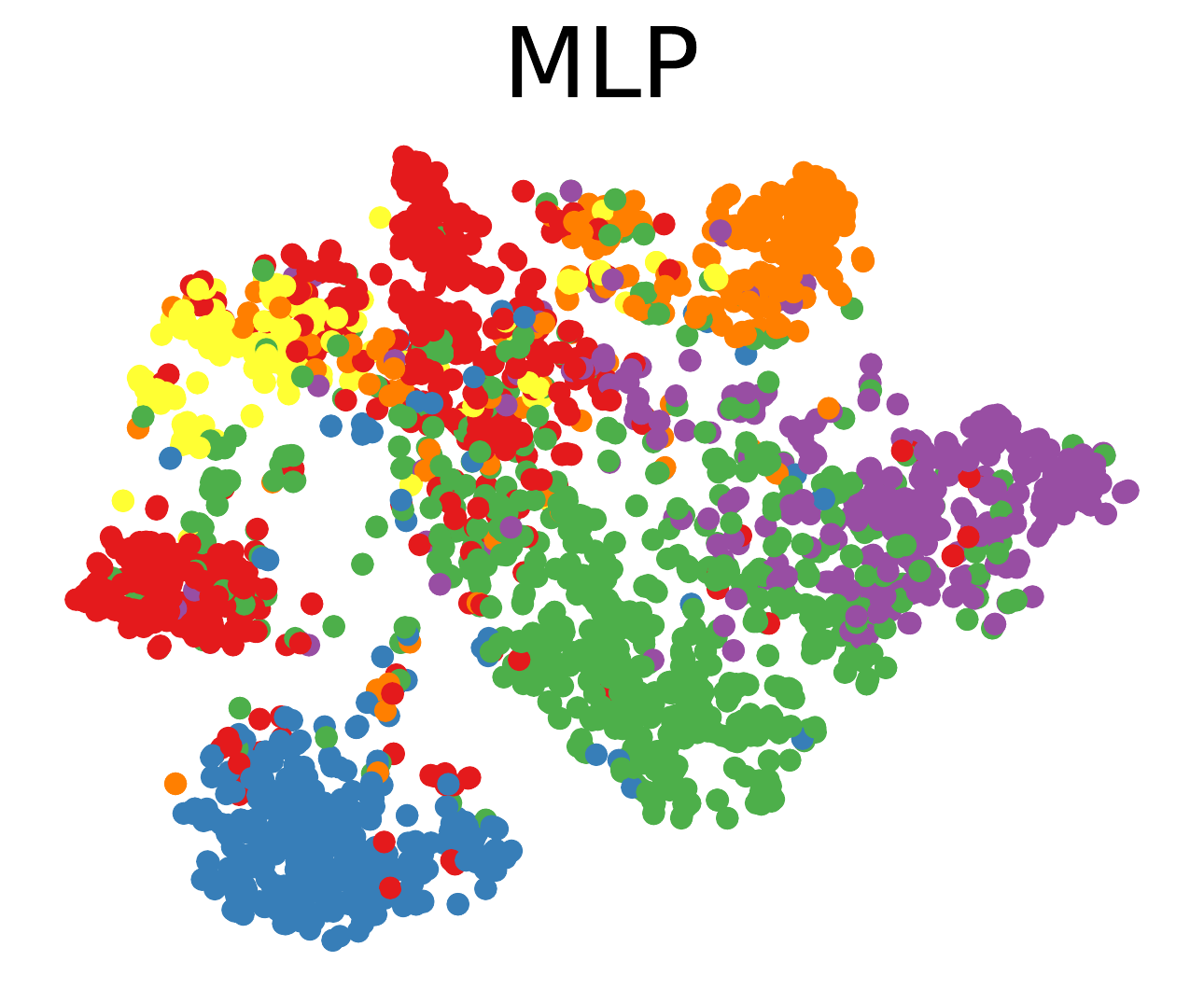}}\hspace{-2mm}
	\subfigure{
		\includegraphics[width=0.12\textwidth]{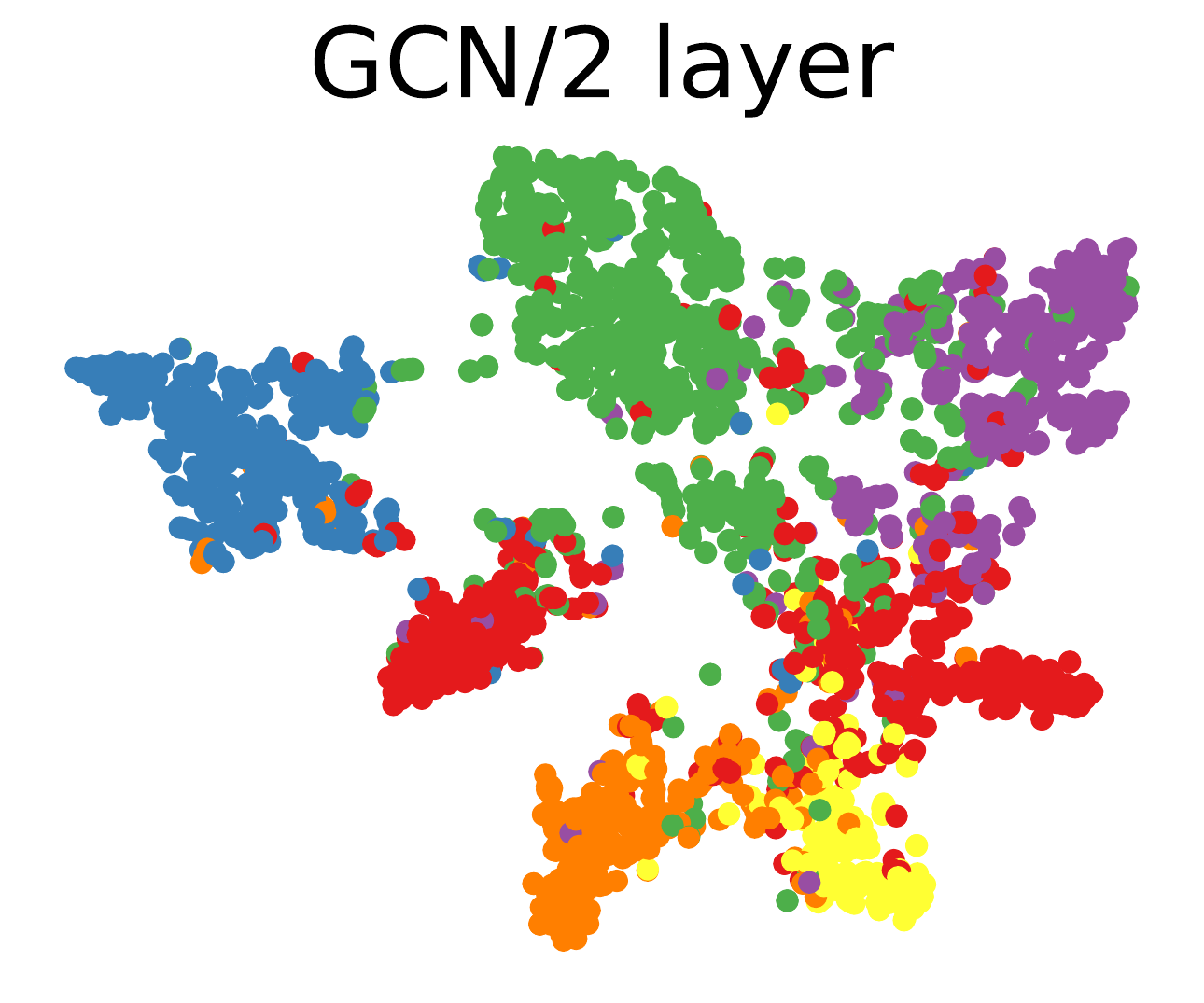}}\hspace{-2mm}
	\subfigure{
		\includegraphics[width=0.12\textwidth]{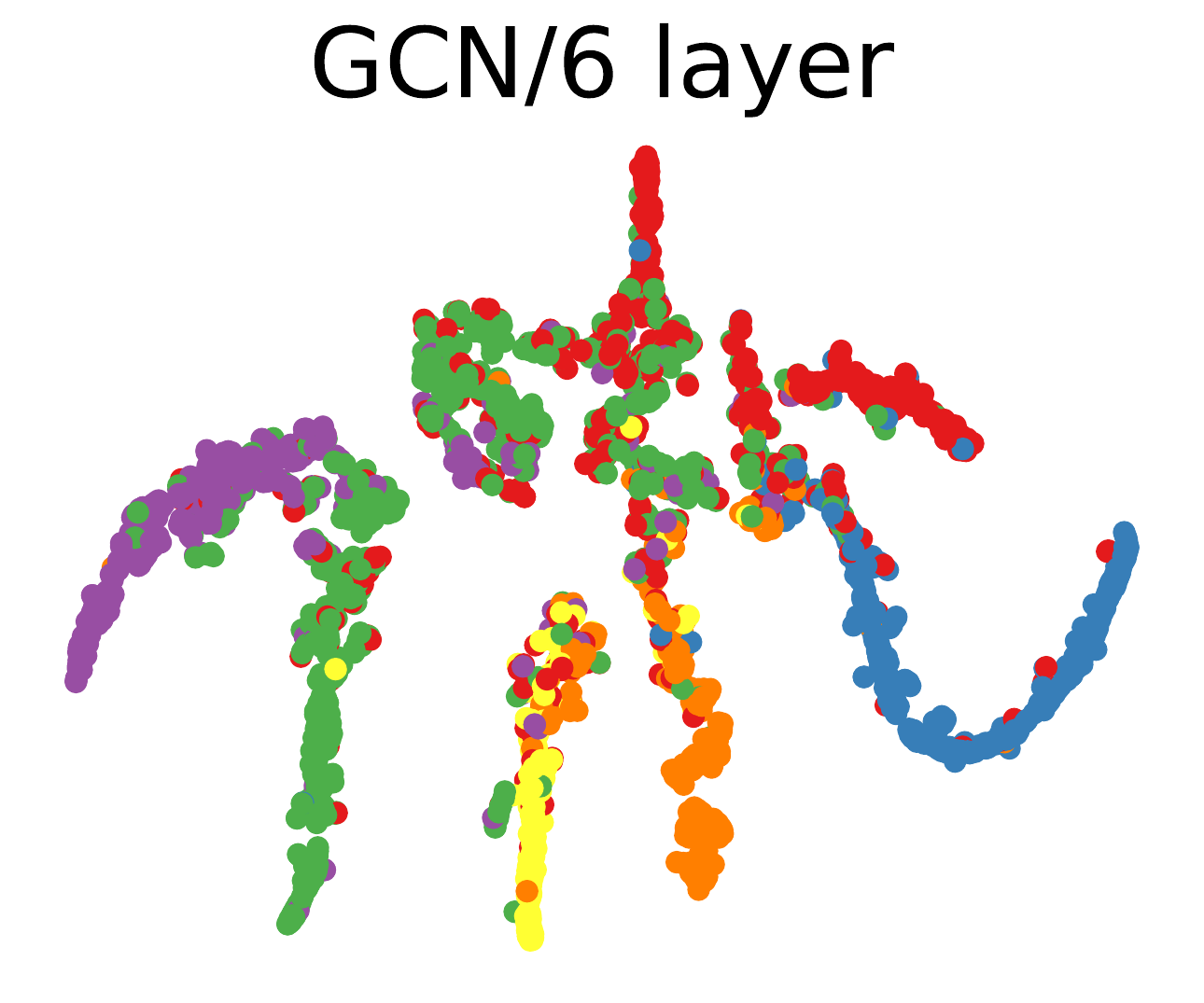}}\hspace{-2mm}
	\subfigure{
		\includegraphics[width=0.12\textwidth]{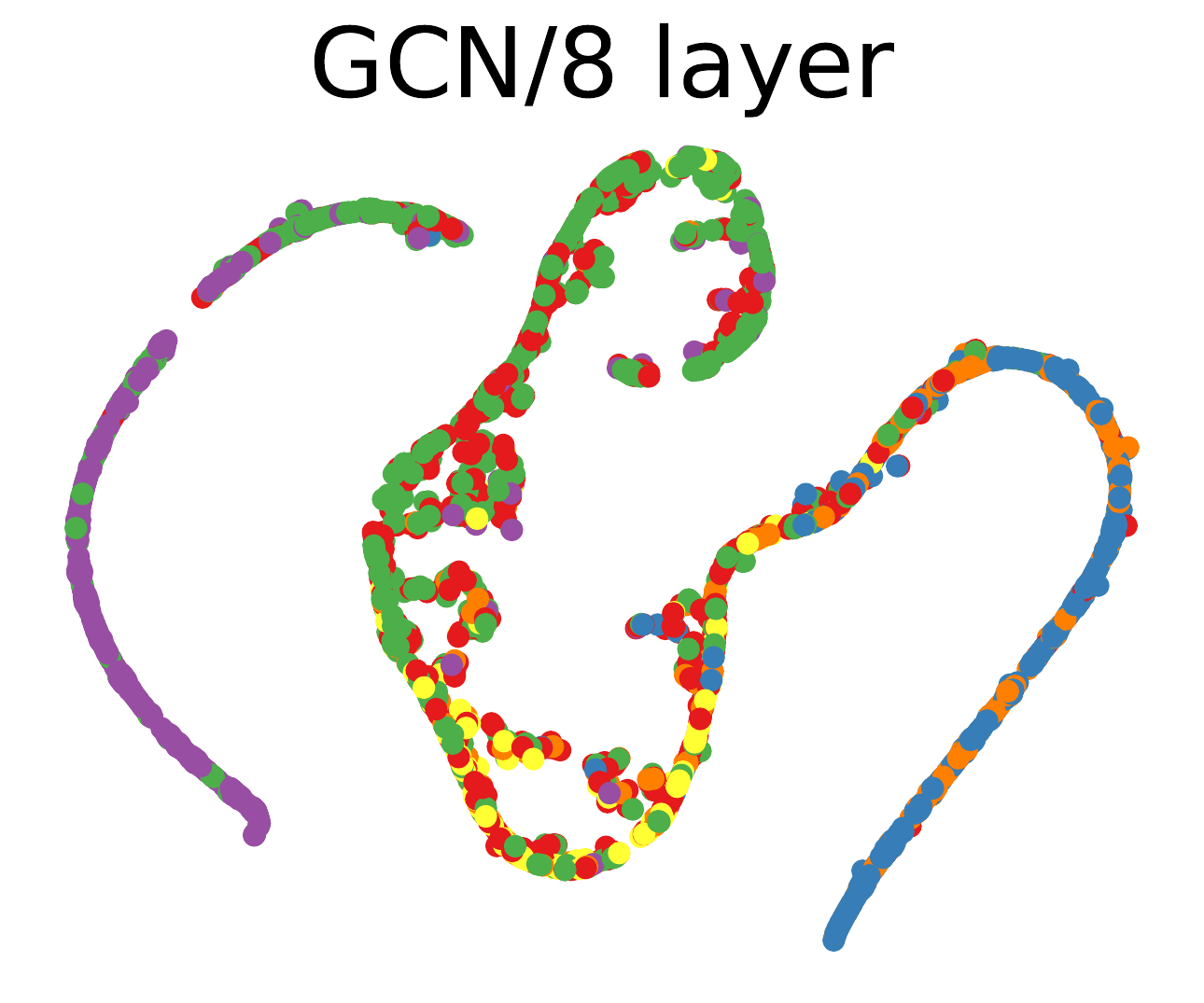}}\hspace{-2mm}
	\subfigure{
		\includegraphics[width=0.12\textwidth]{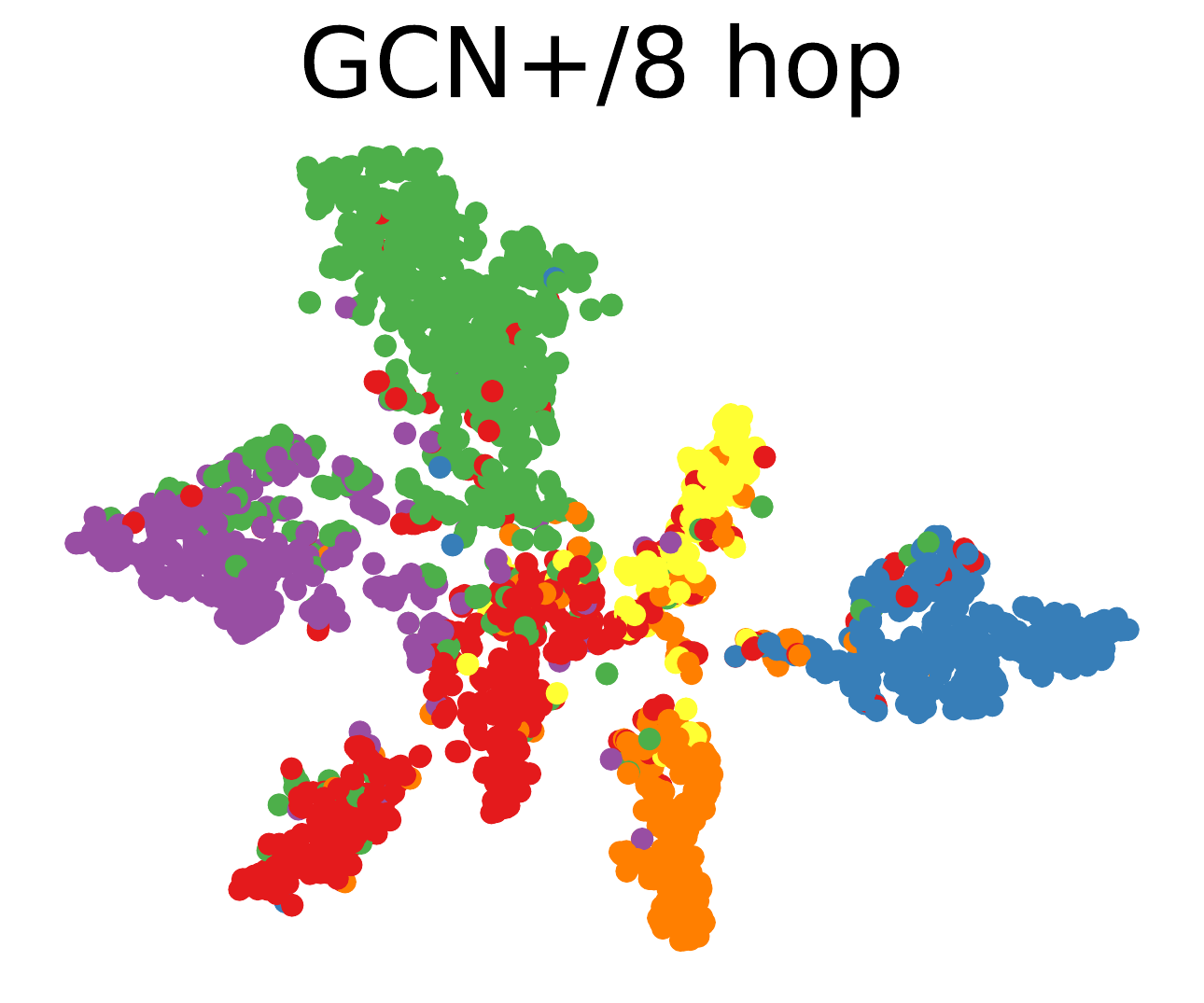}}\hspace{-2mm}
	\subfigure{
		\includegraphics[width=0.12\textwidth]{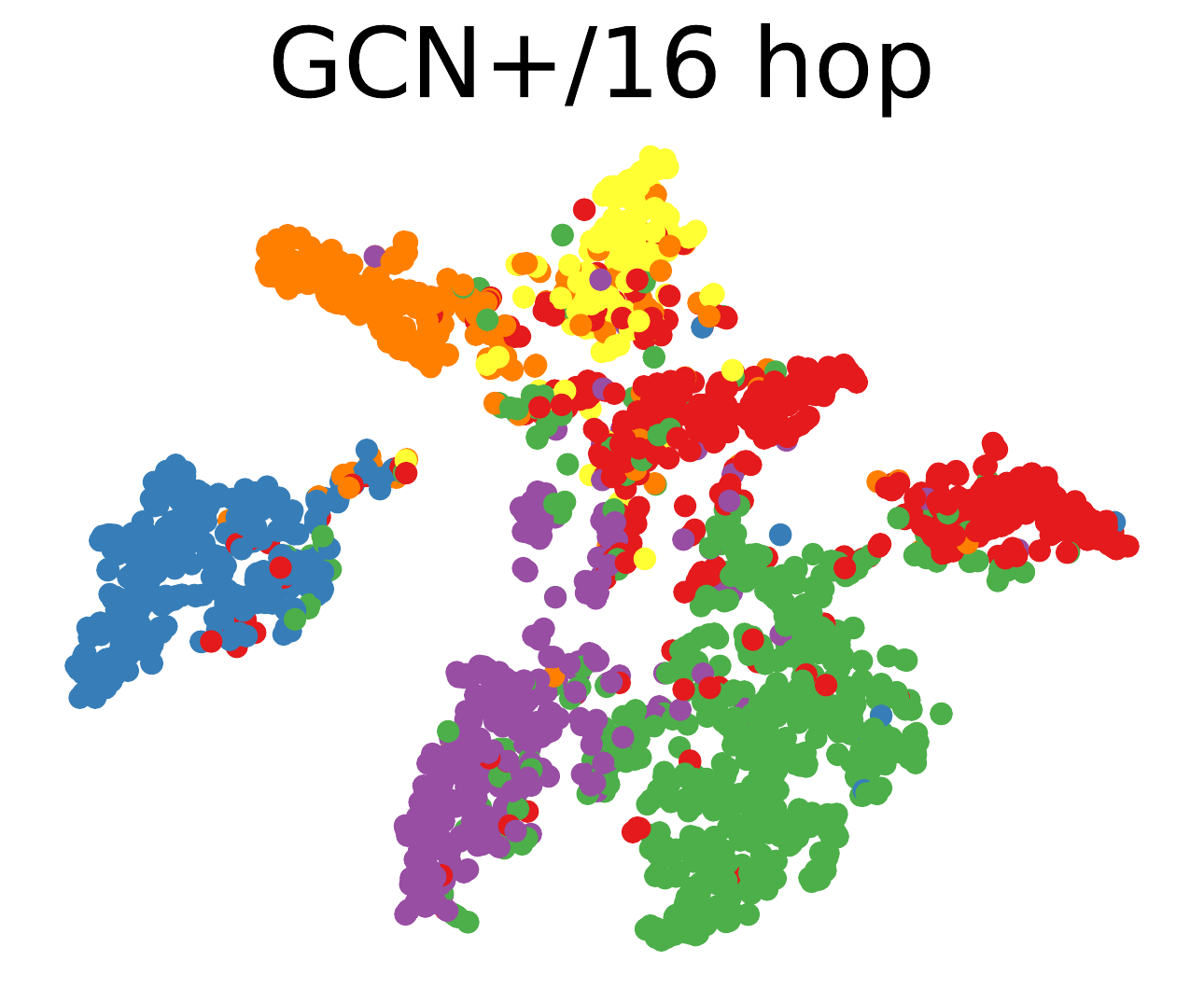}}\hspace{-2mm}
	\subfigure{
		\includegraphics[width=0.12\textwidth]{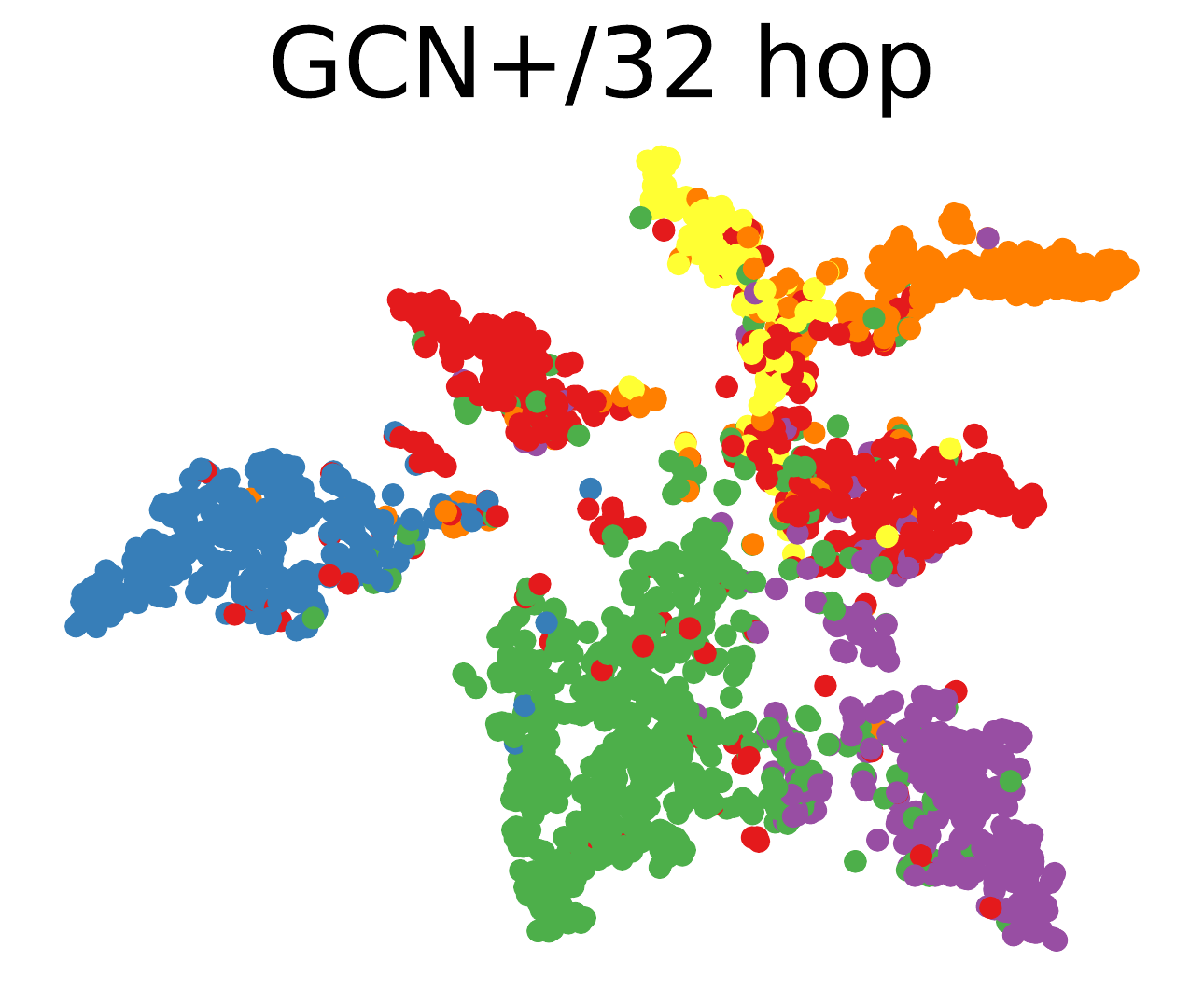}}
	\caption{t-SNE Visualization of learned node representations, which include original features, MLP, different layers of GCN and different hops of GCN+ on \textit{Cora}. Colors represent node classes.}
	\label{model_vs}
\end{figure*}

Several studies \cite{li2018deeper, xu2018representation, klicpera2018predict, chen2020simple, liu2020towards} have noticed over-smoothing, that is after multiple propagations, the final output of vanilla multi-layer GCN converges to a vector which only carries the information of the degree of graph and the node features are indistinguishable. Fig. \ref{model_vs} shows the node representations of vanilla multi-layer GCN on a small citation network \textit{Cora}. We can observe that 2-layer GCN learns a meaningful embeddings which distinguish the different classes whereas more layers degrade the performance and lead to indistinguishable features.

%Although most deep GCNs suffers the over-smoothing, this does not means that information from high-neighbors is useless. We need more scientific and reasonable design to integrate this to the model. There are only a few work\cite{chen2020simple, liu2020towards} shows that deep model outperforms the shallow models.
Different from previous studies, we interpret the current graph convolutional operations from an optimization perspective, and argue that over-smoothing is mainly caused by the naive first-order approximation of the solution to the optimization problem. By solving it and applying the first-order approximation, we get the standard GCN kernel. This suggests that the original GCN kernel can be viewed as a simplified version of the solution. We argue that this simplification loses necessary information which is crucial to tackle the over-smoothing to some extent. Based on this observation, two metrics are proposed to measure the smoothness of connected and disconnected pairwise node features respectively. Furthermore, we set three constraints: (a) the embedding learned by GCNs should not be too far off of the original features; (b) the connected nodes should have similar embeddings; (c) the disconnected nodes are assumed to have different embeddings. 

As a result, we build a universal theoretical framework of GCN from an optimization perspective which smooths the node features and regularizes the (disconnected) node feature simultaneously. We consider two different cases of our framework, where the first case contains the current popular GCN \cite{kipf2016semi}, SGC \cite{wu2019simplifying} and PPNP \cite{klicpera2018predict}, and the second case regularizes the pairwise distance of disconnected nodes.

The contributions of this work are summarized as follow:
\begin{itemize}
	\item We provide a universal theoretical framework of GCN from an optimization perspective where the popular GCNs can be viewed as a special case of it. Furthermore, we derive a novel convolutional kernel named GCN+, which relieves the over-smoothing inherently and has lower parameter amount.
	\item We propose two quantitative metric to measure the smoothness and over-smoothness of the final nodes representations, which provides new insight to analyze the over-smoothing.
	\item We conduct extensive experiments on several public real-world datasets. Our results demonstrate the superior performance of GCN+ over state-of-the-art baseline methods.
\end{itemize} 

\section{Notations}
Given an undirected graph $G=(V,E,X)$, $V$ is node set with $|V|=n$, $E$ is edge set. Let $A\in \mathbb{R}^{n \times n}$ denote the adjacency matrix, where $A_{ij}=1$ if there is an edge between node $i$ and node $j$ otherwise 0. Let $D\in \mathbb{R}^{n \times n}$ denote the diagonal degree matrix where $D_{ii}=\sum_{j}A_{ij}$. Each node is associated with $d$ features, and $X \in \mathbb{R}^{n\times d}$ is the feature matrix of nodes. each row of $X$ is a signal defined over nodes. The graph Laplacian matrix is defined as $L=D-A$. Let $\tilde{A}=A+I$ and $\tilde{D}=D+I$ denote the adjacency and degree matrices of the self-loop graph respectively. We denote $\tilde{A}_{\textit{sym}}=\tilde{D}^{-1/2} \tilde{A} \tilde{D}^{-1/2}$ and $\tilde{A}_{\textit{rw}}=\tilde{D}^{-1} \tilde{A}$. Assume that each node $v_i$ is associated with a class label $y_i \in Y$ where $Y$ is a set of $c$ classes. Let $N(v)$ denote the neighbors of $v$ in graph, that is $N(v)=\{u\in V|\{u,v\}\in E\}$ and $\tilde{N}(v)=N(v) \cup \{v\}$.
$L'$ is the Laplacian matrix of the graph $G'(V',E',X)$, which is the complement of $G$, that means $G'$ has the same nodes as $G$ whereas if $\{u,v\}\in E$, then $\{u,v\}\notin E'$. Let $A'$ and $D'$ denote the corresponding adjacency and degree matrix respectively. We have $A'+A=J_n-I_n$ and $D'+D=(n-1)I$ where $J_n$ is a matrix whose element are all 1. Let $\text{num}(E)$ and $\text{num}(E')$ denote the numbers of edges in $G$ and $G'$ respectively, we have $\text{num}(E)+\text{num}(E')=\frac{n(n-1)}{2}$.
\section{Perspectives of GCN}
Here we provide three views to derive or understand the vanilla GCNs.
\subsection{Spectral Graph  Convolution}
\citet{bruna2013spectral} define the spectral convolutions on graph by applying a filter $g_\theta$ in the Fourier domain to a graph signal. ChebNet \cite{defferrard2016convolutional} suggests that the graph convolutional operation can be further approximated by the $k$-th order Chebyshev polynomial of Laplacian. \citet{kipf2016semi} simplify the ChebNet and obtains a reduced version of ChebNet by the renormalization trick:
\begin{equation} \label{gcn}
H^{(l+1)}\!\!=\!\!\sigma(\!\tilde{A}_{\textit{sym}}H^{(l)}W^{(l)})\!\!=\!\!\sigma(\tilde{D}^{-\frac{1}{2}}\tilde{A}\tilde{D}^{-\frac{1}{2}}H^{(l)}W^{(l)}),
\end{equation}
where $\sigma$ denote the activation function such as ReLU. $W^{(l)}$ is a layer-specific trainable weight matrix. $H^{(l)}$ is the feature matrix of $l$-th layer and $H^{(0)}=X$.

\subsection{Message Passing}
Message passing \cite{gilmer2017neural} means that a node on the graph aggregates the message from neighbors and update its embedding:
\begin{equation} \label{mpnn}
h_v^{(l)}=U_l\bigg(h_v^{(l-1)}, \sum_{u \in N(v)}M_l\big(h_u^{(l-1)}, h_v^{(l-1)}, e_{uv}\big)\bigg),
\end{equation}
where $M_l(\cdot)$ and $U_l(\cdot)$ are message aggregation function and vertex update function, respectively. $h_v^{(l)}$ denotes the hidden state of node $v$ at $l$-th layer, and $e_{uv}$ is the edge features.

In this way, GCN layer can be decomposed into two steps, including the neighbors' message aggregation and update:
\begin{equation} \label{gcn-mpnn}
h_v^{(l)}=\sigma\bigg(W^{(l)}\sum_{u\in \tilde N (v)} \frac{h^{(l-1)}_u}{\sqrt{|N(v)||N(u)|}}\bigg).
\end{equation}

Here a GCN layer can be viewed as a weighted average of all neighbors' message where the weighting is proportional to the inverse of the number of neighbors.

\subsection{Graph Regularized  Optimization} \label{GRO}
Let $\bar{X} \in \mathbb{R}^{n\times d}$ denote the final node embeddings matrix, and $\bar{x}_i$ is the $i$-th row of $\bar{X}$. We consider the following optimization problem:
\begin{equation} \label{gcn-opt}
f = \min_{\bar X} \bigg(\sum_{i \in V}\|\bar{x}_i -x_i\|_{\tilde{D}}^2 + \alpha \sum_{\{i,j\} \in E}\|\bar{x}_i -\bar{x}_j\|_2^2\bigg),
\end{equation}
where $(x,y)_{\tilde D}=\sum_{i\in V}d(i)x(i)y(i)$, if $x=y$, we have$\quad \|x\|_{\tilde D}=\sqrt{(x,x)_{\tilde D}}$.

The first term in the above optimization problem is the fitting constraint, which means the output features (also called embeddings) should not be too far off of the input features, while the second term is the smoothness constraint, which means the connected nodes should have similar embeddings. $\alpha > 0$ is a hyperparameter to balance the importance of two objections. It is worth noting that there is no limit to the specific transformation from $X$ to $\bar{X}$.

Before solving the optimization problem, we have the following lemma.
\begin{lemma}
	$\tilde{A}_{\textit{rw}}$ and $\tilde{A}_{\textit{sym}}$ always have the same eigenvalues $|\lambda|\leq1$.
\end{lemma}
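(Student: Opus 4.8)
The plan is to split the claim into two independent parts: (i) $\tilde{A}_{\textit{rw}}$ and $\tilde{A}_{\textit{sym}}$ have the same eigenvalues, and (ii) those eigenvalues all lie in $[-1,1]$.

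For part (i), I would first note that $\tilde{D}$ is invertible, since $\tilde{D}_{ii}=D_{ii}+1\ge 1$, so $\tilde{D}^{1/2}$ and $\tilde{D}^{-1/2}$ are well-defined positive diagonal matrices. A direct computation then gives
\[
\tilde{D}^{-1/2}\,\tilde{A}_{\textit{sym}}\,\tilde{D}^{1/2}=\tilde{D}^{-1/2}\bigl(\tilde{D}^{-1/2}\tilde{A}\tilde{D}^{-1/2}\bigr)\tilde{D}^{1/2}=\tilde{D}^{-1}\tilde{A}=\tilde{A}_{\textit{rw}},
\]
so the two matrices are similar; hence they have the same characteristic polynomial and therefore the same eigenvalues with the same multiplicities. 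Since $\tilde{A}_{\textit{sym}}$ is real and symmetric, all of these eigenvalues are in addition real.

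For part (ii), I would work with the symmetric matrix $\tilde{A}_{\textit{sym}}$ and show that both $I-\tilde{A}_{\textit{sym}}$ and $I+\tilde{A}_{\textit{sym}}$ are positive semidefinite. Using $(\tilde{A}_{\textit{sym}})_{ij}=\tilde{A}_{ij}/\sqrt{\tilde{D}_{ii}\tilde{D}_{jj}}$ together with $\sum_j \tilde{A}_{ij}=\tilde{D}_{ii}$, for every $y\in\mathbb{R}^n$ one obtains the identities
\[
y^\top(I-\tilde{A}_{\textit{sym}})y=\tfrac{1}{2}\sum_{i,j}\tilde{A}_{ij}\Bigl(\tfrac{y_i}{\sqrt{\tilde{D}_{ii}}}-\tfrac{y_j}{\sqrt{\tilde{D}_{jj}}}\Bigr)^2\ge 0,\qquad y^\top(I+\tilde{A}_{\textit{sym}})y=\tfrac{1}{2}\sum_{i,j}\tilde{A}_{ij}\Bigl(\tfrac{y_i}{\sqrt{\tilde{D}_{ii}}}+\tfrac{y_j}{\sqrt{\tilde{D}_{jj}}}\Bigr)^2\ge 0.
\]
Hence every eigenvalue $\lambda$ of $\tilde{A}_{\textit{sym}}$ satisfies $1-\lambda\ge 0$ and $1+\lambda\ge 0$, i.e. $|\lambda|\le 1$, and by part (i) the same bound holds for $\tilde{A}_{\textit{rw}}$. (Alternatively, $|\lambda|\le 1$ for $\tilde{A}_{\textit{rw}}=\tilde{D}^{-1}\tilde{A}$ follows at once from the fact that it is row-stochastic --- nonnegative entries, each row summing to $1$ --- so its spectral radius is bounded by its maximum row sum, which is $1$.)

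I do not expect a genuine obstacle here; the only care needed is the elementary bookkeeping in the quadratic-form identities (symmetrizing the double sum and correctly carrying the degree factors $\tilde{D}_{ii}$), together with recalling that similarity preserves the spectrum so that the bound transfers from the symmetric matrix to $\tilde{A}_{\textit{rw}}$.
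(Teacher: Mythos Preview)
Your argument is correct and is the standard way to establish this fact: the similarity $\tilde{D}^{-1/2}\tilde{A}_{\textit{sym}}\tilde{D}^{1/2}=\tilde{A}_{\textit{rw}}$ gives equality of spectra, and the quadratic-form identities (equivalently, the row-stochasticity of $\tilde{A}_{\textit{rw}}$) give the bound $|\lambda|\le 1$. The paper itself states this lemma without proof, so there is nothing to compare against; your write-up would serve perfectly well as the missing justification.
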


%\begin{corollary} \label{inverse}
%	$(I_n-\alpha\tilde{A}_{\textit{rw}})^{-1}$ and $(I_n-\alpha\tilde{A}_{\textit{sym}})^{-1}$ exist if $\alpha \in[0,1)$. 
%\end{corollary}
\begin{corollary} \label{inverse}
	$(I_n-\alpha\tilde{A}_{\textit{rw}})$ and $(I_n-\alpha\tilde{A}_{\textit{sym}})$ are invertible if $\alpha \in[0,1)$. 
\end{corollary}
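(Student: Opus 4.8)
The plan is to read off invertibility directly from the spectrum, using the Lemma as the only real input. First I would recall the standard fact that a square matrix $M$ is invertible precisely when $0$ is not an eigenvalue of $M$ — its determinant equals the product of its eigenvalues counted with multiplicity, so a zero eigenvalue is the only obstruction. Hence it suffices to show that $0$ lies in the spectrum of neither $I_n-\alpha\tilde{A}_{\textit{rw}}$ nor $I_n-\alpha\tilde{A}_{\textit{sym}}$.

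Next I would invoke the spectral-shift identity: for any square $M$, the eigenvalues of $I_n-\alpha M$ are exactly the numbers $1-\alpha\lambda$ as $\lambda$ ranges over the eigenvalues of $M$ (with the same eigenvectors). By the Lemma, every eigenvalue $\lambda$ of $\tilde{A}_{\textit{rw}}$ — equivalently of $\tilde{A}_{\textit{sym}}$, since the two matrices share their spectrum — satisfies $|\lambda|\le 1$. For $\alpha\in[0,1)$ this yields $|\alpha\lambda|=\alpha|\lambda|\le\alpha<1$, so $\alpha\lambda\ne 1$ and therefore $1-\alpha\lambda\ne 0$. Thus $0$ avoids the spectrum of both shifted matrices, which are consequently invertible, completing the proof.

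There is essentially no obstacle here: the entire content is carried by the Lemma, and the only point worth stating with care is the spectral-mapping identity for $I_n-\alpha M$ together with the observation that it applies to $\tilde{A}_{\textit{rw}}$ even though that matrix is not symmetric (the invertibility/eigenvalue equivalence and the shift identity hold for arbitrary square matrices). If one prefers to avoid eigenvalues altogether, I would instead bound the spectral radius $\rho(\alpha\tilde{A}_{\textit{sym}})=\alpha\,\rho(\tilde{A}_{\textit{sym}})\le\alpha<1$ and appeal to the convergent Neumann series $\sum_{k\ge 0}(\alpha\tilde{A}_{\textit{sym}})^k$, which furnishes an explicit inverse of $I_n-\alpha\tilde{A}_{\textit{sym}}$; the same argument applies verbatim to $\tilde{A}_{\textit{rw}}$ since it has the same spectral radius by the Lemma.
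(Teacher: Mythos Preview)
Your proof is correct and matches the paper's intended approach: the corollary is stated without an explicit proof, as an immediate consequence of the Lemma, and the eigenvalue-shift argument you give (that $1-\alpha\lambda\ne 0$ whenever $|\lambda|\le 1$ and $\alpha\in[0,1)$) is exactly the reasoning the authors rely on---indeed, they invoke the same ``$M$ is invertible iff $0$ is not an eigenvalue'' principle explicitly in the proof of Theorem~\ref{theorem2}. Your Neumann-series alternative is a nice bonus and is in fact implicitly used later in the paper when the power iteration is shown to converge.
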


\begin{lemma}
	Given a graph with adjacency matrix $A$, the powers of $A$ give the number of walks between any two vertices.
\end{lemma}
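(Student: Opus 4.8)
The plan is to prove the sharper quantitative statement from which the lemma follows: for every nonnegative integer $k$, the entry $(A^k)_{ij}$ equals the number of walks of length $k$ from vertex $i$ to vertex $j$, where a walk of length $k$ means a sequence $i=v_0,v_1,\dots,v_k=j$ with $\{v_t,v_{t+1}\}\in E$ for each $t$ (repeated vertices and edges allowed). The proof is a straightforward induction on $k$, so the main work is setting up the bookkeeping correctly rather than any deep estimate.

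For the base case I would take $k=0$: $A^0=I_n$, and there is exactly one walk of length $0$ from $i$ to itself and none between distinct vertices, which matches the identity matrix. (Equivalently one may start at $k=1$, where $(A^1)_{ij}=A_{ij}$ directly counts the walks of length $1$, i.e. the edges joining $i$ and $j$.) For the inductive step, assume the claim for some $k\ge 0$ and expand $(A^{k+1})_{ij}=\sum_{\ell\in V}(A^k)_{i\ell}A_{\ell j}$ using the definition of matrix multiplication. Every walk of length $k+1$ from $i$ to $j$ decomposes uniquely as a walk of length $k$ from $i$ to some intermediate vertex $\ell$ followed by the edge $\{\ell,j\}$, and conversely every such pair yields a valid walk; hence the number of length-$(k+1)$ walks from $i$ to $j$ equals $\sum_{\ell}\big(\#\text{walks of length }k\text{ from }i\text{ to }\ell\big)\cdot A_{\ell j}$, which by the inductive hypothesis is precisely $(A^{k+1})_{ij}$.

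The only point requiring genuine care — and it is conceptual rather than computational — is the bijection in the inductive step: one must verify that the "last-edge" decomposition of a walk is well defined and counts walks with the correct multiplicity, so that summing over the intermediate vertex $\ell$ neither over- nor under-counts. Once this combinatorial correspondence is stated precisely, the conclusion is immediate from the associativity of matrix multiplication ($A^{k+1}=A^k A$), and no further obstacle remains.
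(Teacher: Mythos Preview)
Your proof is correct and is in fact the standard textbook argument for this classical combinatorial fact. The paper itself does not supply a proof of this lemma at all --- it is simply stated as a known result and used to justify Corollary~\ref{high-order} --- so there is nothing to compare against; your induction on the walk length with the last-edge decomposition is exactly the canonical approach and would be fully acceptable here.
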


\begin{corollary}  \label{high-order}
	$A^k$ includes the information of high-order neighbors.
%	that means $A^{k}_{i,:}$ represents the transformation of $k$-order information of node $i$.
\end{corollary}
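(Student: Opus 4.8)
The plan is to obtain the statement as an immediate consequence of Lemma~2, after fixing a precise reading of the (deliberately informal) phrase ``includes the information of high-order neighbors.'' First I would recall that, by Lemma~2, the entry $(A^k)_{ij}$ counts the walks of length exactly $k$ from $v_i$ to $v_j$. Hence $(A^k)_{ij}>0$ iff there is at least one such walk, i.e.\ iff $v_j$ is reachable from $v_i$ in $k$ steps; for $k\ge 2$ this set of reachable vertices generally strictly contains $N(v_i)$, so it captures genuine high-order neighbors, and the magnitude of $(A^k)_{ij}$ additionally records the multiplicity of length-$k$ connections. Thus the support and entries of $A^k$ encode exactly the $k$-hop neighborhood structure, which is the content of the corollary.

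Second, to make ``high-order neighbor'' precise I would call $v_j$ a $k$-hop neighbor of $v_i$ when $\mathrm{dist}(v_i,v_j)\le k$, and then prove by induction on $k$ that $\bigl(\sum_{m=0}^{k}A^m\bigr)_{ij}>0$ iff $\mathrm{dist}(v_i,v_j)\le k$: the base case being immediate from the definitions of $I_n$ and $A$, and the inductive step combining $A^{k}=A\cdot A^{k-1}$ with Lemma~2. This shows the polynomial $\sum_{m\le k}A^m$ --- and, in the self-loop normalized operators actually used by GCN/SGC, the single power $\tilde A^{k}$ or its rescalings $\tilde A_{\textit{sym}}^{k}$ and $\tilde A_{\textit{rw}}^{k}$, whose $(i,j)$ entry is a strictly positive reweighting of $(\tilde A^{k})_{ij}$ --- aggregates the features of every neighbor within $k$ hops, which is precisely how the later derivation justifies using higher powers of the propagation matrix.

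The one point that needs care, and the only thing I would call an ``obstacle,'' is a parity issue: $A^{k}$ \emph{by itself} can have a zero $(i,j)$ entry even when $\mathrm{dist}(v_i,v_j)<k$ (for instance in a bipartite graph, odd and even powers connect opposite sides), so the clean ``$A^k$ sees all neighbors up to distance $k$'' statement holds only for the cumulative sum $\sum_{m\le k}A^m$ or for the self-loop matrix $\tilde A$, where adding the identity destroys the parity obstruction and makes $(\tilde A^{k})_{ij}>0$ whenever $\mathrm{dist}(v_i,v_j)\le k$. Once this distinction is acknowledged, no real computation is required: the corollary is just the graph-theoretic reading of Lemma~2.
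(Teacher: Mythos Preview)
Your proposal is correct and follows the same route as the paper: the corollary is stated immediately after Lemma~2 with no separate proof, so the intended argument is exactly the one you give---read off from Lemma~2 that $(A^k)_{ij}$ counts length-$k$ walks and hence encodes $k$-hop connectivity. Your treatment is in fact considerably more careful than the paper's (which offers no proof at all): the parity/bipartite caveat and the distinction between $A^k$ and $\sum_{m\le k}A^m$ or $\tilde A^k$ are genuine subtleties the paper glosses over, and your handling of them is sound.
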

Next, we derive the closed-form solution of Eq. \ref{gcn-opt}. Specifically, we rewrite  Eq. \ref{gcn-opt} as 
$$
f=\min_{\bar X} \bigg(\text{Tr}\big((\bar X-X)(\bar X-X)^T\tilde{D}\big) + \alpha\text{Tr}({\bar X}^TL \bar X)\bigg).
$$
Differentiating $f$ with respect to $\bar{X}$, we have 
$$
\frac{df}{d\bar{X}}=\tilde{D}(\bar{X}-X)+\alpha L\bar{X}=0.
$$
Notice Corollary \ref{inverse}, we have
%$$
%\bar{X}=\frac{1}{1+\alpha} (I+\frac{\alpha}{1+\alpha} \tilde{D}^{-1}\tilde{A})^{-1}X
%$$
$$
\bar{X}=(1-\mu) (I-\mu\tilde {A}_{\textit{rw}})^{-1}X,
$$
where $\mu=\frac{\alpha}{1+\alpha}$.

Actually, the solution is also the personalized PageRank \cite{page1999pagerank}' s limiting distribution. If we set $\mu=0.5$, we get $\bar X=(2I-\tilde {A}_{\textit{rw}})^{-1}X$, 
and $\tilde {A}_{rw}X$ is the first-order Taylor approximation. 
By replacing $\tilde {A}_{rw}$ with $\tilde {A}_{\textit{sym}}$, we get standard graph convolution kernel. In other words, we lose the information from high-order neighbors, which is contained in the error series of the Taylor expansion. (See Corollary \ref{high-order}).

In a nutshell, we obtains the well-known kernel or resemble form of the graph convolution from different ways. 
\section{Over-smoothing in Vanilla Deep GCN}

Neural network usually performs better when stack more layers while graph neural network does not benefit from the depth. On the contrary, more layers often result in significant degradation in performance. 
%Previous work have noticed this phenomenon. \cite{li2018deeper} suggest that the graph convolution of GCN is a special form of Laplacian smoothing. \cite{chen2020simple} analyze the spectral property of multi-layer GCN, which shows that after $k$ propagate, the final output of GCN converges to a vector which only carries the information of the degree of graph. 

Previous work illustrates the over-smoothing by computing the limiting distribution of $A_{k}$ when $k \rightarrow \infty$, Actually, this is not identical with vanilla deep GCN, which contains non-linear transformation among different layers. Litter work considers the non-linearity in multi-layer GCN. \citet{oono2019graph} extend the linear analysis to the non-linearity firstly, which considers the ReLU activation function. They suggest that the node features of a $k$-layer GCNs will converge to a subspace and incur information loss, which makes the node feature indistinguishable.

At first, one main reason we introduce the deep architecture in GCN is that we want to use the long-range neighbor's information. We argue that vanilla deep GCN is not the correct way to capture this information. However, It does not mean that deep architecture is useless. \citet{chen2020simple} and \citet{liu2020towards} have shown that more layers can boost the performance of GCN on several datasets and tasks.

To quantify the over-smoothing in vanilla deep GCN, we compute the overall pairwise distance of node embeddings as follows:
\begin{equation} 
\nonumber
\begin{aligned}
\!M_{\textit{\!overall}}&\!=\!\!\!\sum_{i,j\in\!V}\!\!\|\bar{x}_i\!-\!\bar{x}_j\|_2^2\!=\!\!\!\!\sum_{\{i,j\}\in\!E}\!\!\! \|\bar{x}_i\!-\!\bar{x}_j\|_2^2\!\!+\!\!\!\!\sum_{\{i,j\}\notin\!E}\!\!\|\bar{x}_i\!-\!\bar{x}_j\|_2^2 \\
&= \text{Tr}({\bar X}^TL \bar X)+ \text{Tr}({\bar X}^TL' \bar X).\\
\end{aligned}
\end{equation}
\begin{figure}[t]
	\subfigure[Overall distance]{
		\label{overall}
		\includegraphics[width=0.22\textwidth]{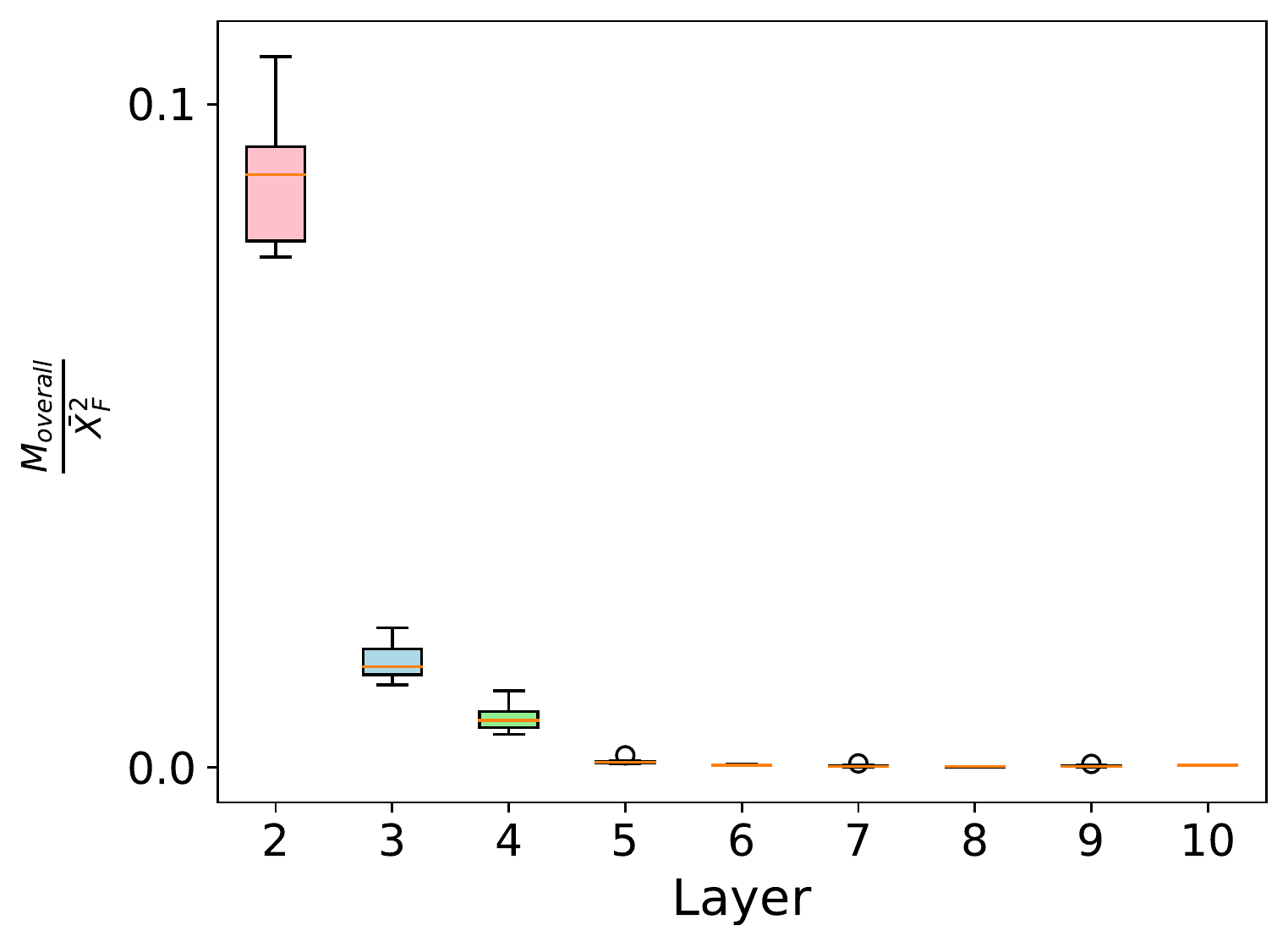}}\hspace{-1mm}
	\subfigure[Fraction of diatsnce]{
		\label{fraction}
		\includegraphics[width=0.24\textwidth]{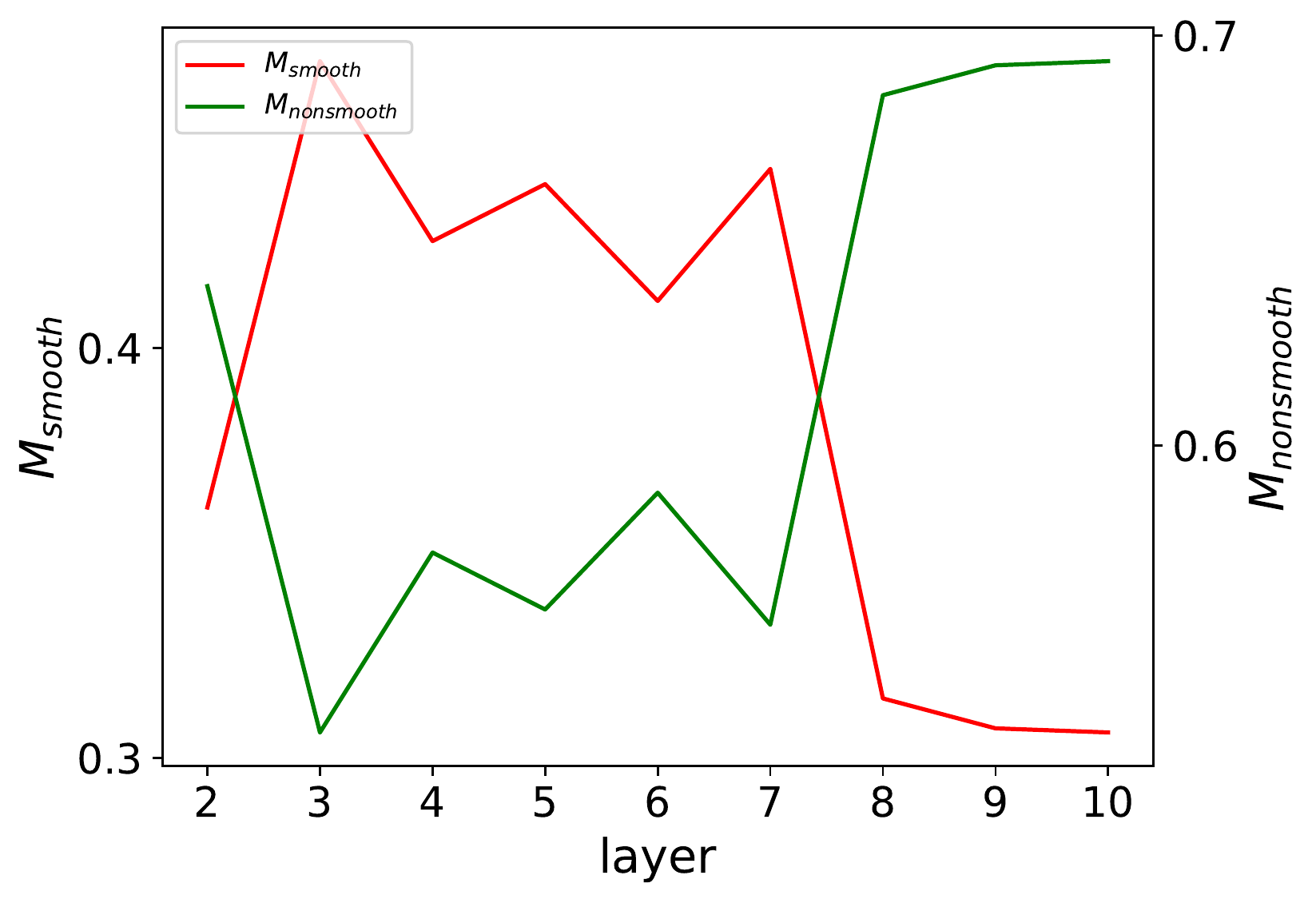}}
	\caption{ $M_{\textit{\!overall}}, M_{\textit{\!smooth}}$ and $M_{\textit{\!non-smooth}}$ of the output node embeddings of Vanilla GCN with increasing layers on \textit{Cora}.}
	\label{boxplot}
\end{figure}

Fig. \ref{overall} depicts the pairwise distance distribution of vanilla GCN with increasing layers on \textit{Cora}. We can see that $M_{overall}$ decreases as the model goes deeper. Revisit the two parts of $M_{overall}$, we propose two fine quantitative metrics to measure the over-smoothing of graph representation.
\begin{equation} 
\begin{aligned}
M_{\textit{smooth}} &= D_{\textit{smooth}}/D_{\textit{overall}},\\
M_{\textit{non-smooth}} &= D_{\textit{non-smooth}} /D_{\textit{overall}},\\
\end{aligned}
\end{equation}
where
\begin{equation} 
\begin{aligned}
D_{\textit{smooth}} &= \text{Tr}({\bar X}^TL \bar X)/\text{num}(E) ,\\
%M_{\textit{smooth}} &= \text{Tr}(\frac{{\bar X}^TL \bar X}{{\bar X}^T \bar X}) ,\\
D_{\textit{non-smooth}} &= \text{Tr}({\bar X}^TL' \bar X)/\text{num}(E'),\\
D_{\textit{overall}}&=D_{\textit{smooth}}+D_{\textit{non-smooth}}.
\end{aligned}
\end{equation}

Here, $\text{num}(E)$ and $\text{num}(E')$ in the denominator are used to eliminate the impact of unbalanced edge numbers in $G$ and $G'$. $M_{\textit{smooth}}$ measures the smoothness of the graph representation of connected pair nodes while $M_{\textit{non-smooth}}$ measures the smoothness of the graph representation of disconnected pair nodes. 

Fig. \ref{fraction} compares $M_{\textit{smooth}}$ and $M_{\textit{non-smooth}}$. We see that $M_{\textit{smooth}}$ contributes to quite a few parts of the overall distance, which seems counter-intuitive. We will discuss this two metrics of GCN+ in Section \ref{oversmooth-of-gcn+} again.
%\begin{equation} 
%\begin{aligned}
%M_{\textit{overall}} &= \!  \sum_{i,j \in V} \!\|\bar{x}_i -\bar{x}_j\|_2^2\\
%&=\! \sum_{i,j \in E} \! \|\bar{x}_i -\bar{x}_j\|_2^2 + \!  \sum_{i,j \notin E} \!\|\bar{x}_i -\bar{x}_j\|_2^2 \\
%&= \text{Tr}({\bar X}^TL \bar X)+ \text{Tr}({\bar X}^TL' \bar X),\\
%M_{\textit{smooth}} &= \text{Tr}({\bar X}^TL \bar X)/\text{num}(E) ,\\
%%M_{\textit{smooth}} &= \text{Tr}(\frac{{\bar X}^TL \bar X}{{\bar X}^T \bar X}) ,\\
%M_{\textit{non-smooth}} &= \text{Tr}({\bar X}^TL' \bar X)/\text{num}(E').
%\end{aligned}
%\end{equation}
%After graph convolution, signals with lower $M_{\textit{smooth}}$ are assumed to be smoother. If we can learn representative node embeddings, the $M_{\textit{smooth}}$ tends to lower while $M_{\textit{non-smooth}}$ tends to be higher.

%\begin{equation} 
%\begin{aligned}
%M_{\textit{overall}} &=  \!  \sum_{i,j \in V} \!\|\bar{x}_i -\bar{x}_j\|_2^2/{\bar X}^2_{F},\\
%M_{\textit{smooth}} &= \text{Tr}({\bar X}^TL \bar X)/ \!  \sum_{i,j \in V} \!\|\bar{x}_i -\bar{x}_j\|_2^2 ,\\
%%M_{\textit{smooth}} &= \text{Tr}(\frac{{\bar X}^TL \bar X}{{\bar X}^T \bar X}) ,\\
%M_{\textit{non-smooth}} &= \text{Tr}({\bar X}^TL' \bar X)/{\bar X}^2_{F} ,
%\end{aligned}
%\end{equation}
%Actually, $M_{\textit{smooth}}$ is also called \textit{Rayleigh quotient}, which describe the smoothness of the signal over the graph. 
\section{A General Framework of GCN}
%Revisit the over-smoothing, which means after multi-layer graph convolution, node representations become indistinguishable.
Recall the graph regularized optimization problem, we add a negative term to constrain the sum of distances between disconnected pairs as follow:
\begin{equation} \label{full-opt}
\nonumber
	f \!\!=\!\min_{\bar X}\!\!\bigg(\!\sum_{i \in V}\|\bar{x}_i \!-\!x_i\|_{\tilde{D}}^2 \!+\! \alpha\!\!\!\!\! \sum_{\{i,j\} \in E} \! \|\bar{x}_i \!-\!\bar{x}_j\|_2^2 \!-\!\beta\!\!\!\!\! \sum_{\{i,j\} \notin E} \!\|\bar{x}_i \!-\!\bar{x}_j\|_2^2\!\bigg)\!,
\end{equation}
where $\alpha$ and $\beta$ are hyperparameters to balance the importance of the corresponding terms.

%We can also derive the closed-form solution of \ref{full-opt}. Here $A$ can be replaced with other forms such as $\tilde{A}_{sys}$. 
%
%Directly calculating such an intractable expression is not only computationally inefficient but also results in a dense $\mathbb{R}^{n \times n}$matrix. It would lead to a high computational complexity and memory requirement when we apply this operator on large graph.

We consider two cases: $\beta=0$ and $\beta \neq0$.
\subsection{Case 1:$\beta=0$}
In this situation, $\bar{X}=(1-\mu)(I_n-\mu\tilde{A}_{\textit{rw}})^{-1}X$ where $\mu=\frac{\alpha}{1+\alpha}\in(0,1)$. Directly calculating such an intractable expression is not only computationally inefficient but also results in a dense $\mathbb{R}^{n \times n}$matrix. It would lead to a high computational complexity and memory requirement when we apply such operator on large graphs. We can achieve linear computational complexity via power iteration.

We use $\tilde{A}$ to denote $\tilde{A}_{\textit{sym}}$ and $\tilde{A}_{\textit{rw}}$. Here we consider a more general expression $(1-\mu)(I_n-\mu \tilde{A})^{-1}H$ where $H=H=f_{\theta}(X)$.
\begin{theorem} \label{theorem1}
	$(I_n-\mu\tilde{A})$ is invertible. Consider the following iterative scheme
	\begin{equation} \label{iterative1}
	\begin{aligned}
	Z^{(0)}&=H, \\
	Z^{(k)}&=\mu\tilde{A}Z^{(k-1)}+(1-\mu) H,
	\end{aligned}
	\end{equation} where $\mu \in (0,1)$.
	When $k \rightarrow \infty$, 
	\begin{equation}
Z^{(\infty)}=(1-\mu)(I_n-\mu\tilde{A})^{-1}H.
	\end{equation} 
%	which is equivalent to $(I_n-\mu\tilde{A})^{-1}$ for classification.
\end{theorem}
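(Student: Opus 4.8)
The plan is to separate the statement into its two assertions---invertibility of $(I_n-\mu\tilde{A})$ and convergence of the iteration---and dispatch the first directly from what has already been established. Since $\mu=\frac{\alpha}{1+\alpha}\in(0,1)\subset[0,1)$, Corollary~\ref{inverse} applies verbatim to both $\tilde{A}_{\textit{sym}}$ and $\tilde{A}_{\textit{rw}}$, so $(I_n-\mu\tilde{A})$ is invertible in either case; the excluded endpoint $\mu=1$ is precisely where this would fail, and it is ruled out by hypothesis.

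For the iteration, the first step is to unroll the recursion into closed form. A one-line induction on $k$ gives
\[
Z^{(k)} = (\mu\tilde{A})^{k}H + (1-\mu)\sum_{i=0}^{k-1}(\mu\tilde{A})^{i}H ,
\]
where the base case $k=0$ is the empty sum and the inductive step follows by applying the update map $Z\mapsto \mu\tilde{A}Z+(1-\mu)H$ once more and re-indexing. It therefore suffices to control the limits of the two terms on the right-hand side.

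The key input is Lemma~1: every eigenvalue of $\tilde{A}$ (be it $\tilde{A}_{\textit{sym}}$ or $\tilde{A}_{\textit{rw}}$) satisfies $|\lambda|\le 1$, so the spectral radius of $\mu\tilde{A}$ is at most $\mu<1$. From $\rho(\mu\tilde{A})<1$ two standard facts follow: $(\mu\tilde{A})^{k}\to 0$ as $k\to\infty$, and the Neumann series $\sum_{i=0}^{\infty}(\mu\tilde{A})^{i}$ converges to $(I_n-\mu\tilde{A})^{-1}$. (For $\tilde{A}_{\textit{sym}}$ one can alternatively diagonalize by the spectral theorem; for $\tilde{A}_{\textit{rw}}$ one uses that it is similar to $\tilde{A}_{\textit{sym}}$ via conjugation by $\tilde{D}^{1/2}$, which is exactly how Lemma~1 is obtained.) Passing to the limit in the closed-form expression yields $Z^{(\infty)}=0+(1-\mu)(I_n-\mu\tilde{A})^{-1}H$, as claimed.

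I do not expect a real obstacle here. The only point that needs a little care is that $\tilde{A}_{\textit{rw}}$ is not symmetric, so the vanishing of $(\mu\tilde{A})^{k}$ and the Neumann-series identity have to be justified through the spectral radius (or the similarity to $\tilde{A}_{\textit{sym}}$) rather than through an orthonormal eigenbasis; the remainder is routine bookkeeping.
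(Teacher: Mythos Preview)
Your proposal is correct and follows essentially the same approach as the paper: invoke Corollary~\ref{inverse} for invertibility, unroll the recursion into $Z^{(k)}=(\mu\tilde{A})^{k}H+(1-\mu)\sum_{i=0}^{k-1}(\mu\tilde{A})^{i}H$, and then pass to the limit using $(\mu\tilde{A})^{k}\to 0$ and the Neumann-series identity. If anything, you are a bit more careful than the paper in spelling out why the limits hold for the non-symmetric $\tilde{A}_{\textit{rw}}$ via the spectral-radius/similarity argument.
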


\begin{proof}
	Using corollary \ref{inverse}, we can see that $(I_n-\mu\tilde{A})$ is invertible. Combining the two equation of \ref{iterative1}, we have 
	$$
	Z^{(k)}=\bigg(\mu^k \tilde{A}^k+(1-\mu) \sum_{i=0}^{k-1} \mu^i \tilde{A}^i\bigg)H.
	$$
	Notice that
	\begin{equation} 
	\begin{aligned}
	\lim_{k \rightarrow \infty}\mu^k \tilde{A}^k&=0, \\
	\lim_{k \rightarrow \infty}\sum_{i=0}^{k-1} \mu^i \tilde{A}^i&=\big(I-\mu\tilde{A}\big)^{-1}.
	\end{aligned}
	\end{equation}
	Hence, the proof is finished. 

\end{proof}

Actually, the prevalent GCN, SGC and APPNP can be viewed as the special variant of Case 1.
\subsection{Case 2:$\beta \neq 0$} 

In this situation, $\bar{X}=Q^{-1}$ if $Q=\big((1+\alpha+\beta)I-(\alpha+\beta)\tilde D^{-1}\tilde A-\beta n\tilde D^{-1}+\beta \tilde D^{-1} \mathbf J_n\big)$ is invertible when  we choose a suitable $\beta$. We will introduce the conditions later.

%$\bar{X}=\big((1+\alpha+\beta)I-(\alpha+\beta)\tilde D^{-1}\tilde A-\beta n\tilde D^{-1}+\beta \tilde D^{-1} \mathbf J_n\big)^{-1}X$. Usually, we set $\beta$ as a small constant. 

First we use the  first-order Taylor approximation of above convolutional kernel ($\text{GCN}^*$) directly without any tricks such as Batch Normalization \cite{ioffe2015batch} or residual connection \cite{he2016deep} on two small citation datasets \textit{Cora} and \textit{Citeseer}. We compare the performance of the vanilla deep GCN and $\text{GCN}^*$ as the model layer increases. Fig. \ref{figure1} shows the result of GCN and $\text{GCN}^*$.  Dashed lines illustrate the performance of GCN, which shows that deep GCN suffers from performance drop. We can see that the performance decay with $\text{GCN}^*$ kernel is much slower.

\citet{oono2019graph} have proved that the node feature of vanilla $k$-layer GCN will converges to an invariant subspace which only carry the information of the connected component and node degree. The convergence speed is proportional to the $\lambda^k$, where $\lambda$ is the supremum of eigenvalue of $\tilde{A}$. In GCN*, $\lambda>1$(see the proof of Theorem \ref{theorem2}), which implies that $\lambda^k$ is large, thus the information loss and over-smoothing are relieved.

Although the modified graph kernel relieves over-smoothing to some extent, more layers do not boost the performance, which is not our focus. However the above result demonstrates that it is an efficient way to tackle the over-smoothing issue. We can achieves linear computational complexity via power iteration similar to Case 1.

\begin{figure}[t]
	\centering
	\includegraphics[width=0.9\columnwidth]{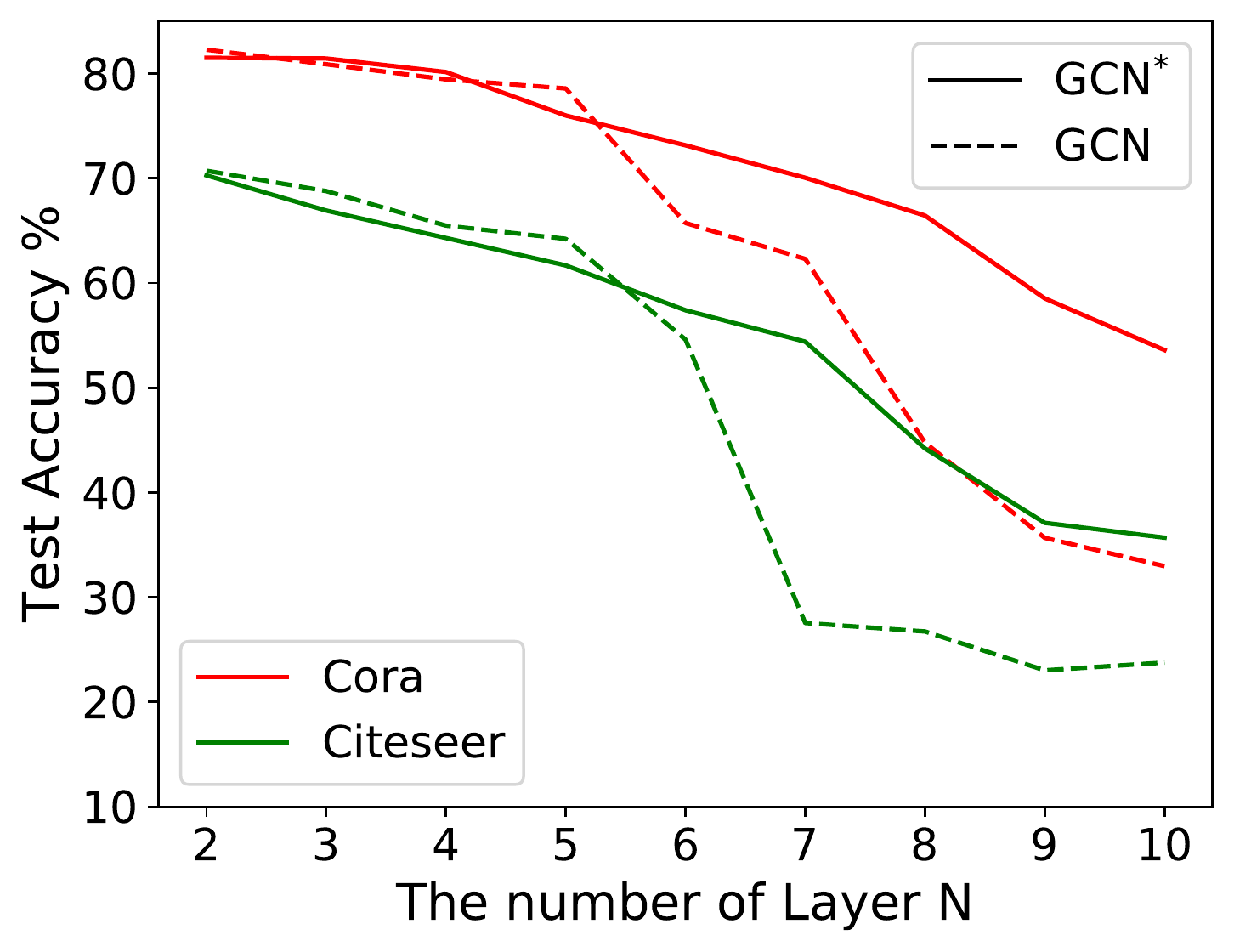} % Reduce the figure size so that it is slightly narrower than the column. Don't use precise values for figure width.This setup will avoid overfull boxes.
	\caption{Performance comparison of vanilla deep GCN vs. $\text{GCN}^*$ with increasing layers on two small datasets.}
	\label{figure1}
\end{figure}

\begin{theorem} \label{theorem2}
	$(I_n-\mu \hat{A})$ is invertible when $\beta<\frac{1}{n}$ where $\mu=\frac{\alpha+\beta}{1+\alpha+\beta}$ and $\hat{A}=\tilde{A}+\frac{\beta n\tilde D^{-1}-\beta \tilde D^{-1} \mathbf J_n}{\alpha+\beta}$.  Consider the following iterative scheme
	\begin{equation} \label{iterative2}
	\begin{aligned}
	Z^{(0)}&=H, \\
	Z^{(k)}&=\mu\hat{A}Z^{(k-1)}+ (1-\mu)H,
	\end{aligned}
	\end{equation} where $\mu \in (0,1)$.
	When $k \rightarrow \infty$, 
	\begin{equation}
	Z^{(\infty)}=(1-\mu)(I_n-\mu\hat{A})^{-1}H.
	\end{equation} 
%	which is equivalent to $(I_n-\mu\hat{A})^{-1}$ for classification.
\end{theorem}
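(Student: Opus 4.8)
The plan is to follow the same template as the proof of Theorem~\ref{theorem1}: unroll the recursion \eqref{iterative2} to $Z^{(k)}=(\mu\hat A)^kH+(1-\mu)\sum_{i=0}^{k-1}(\mu\hat A)^iH$ and send $k\to\infty$, so that everything reduces to showing (i) $I-\mu\hat A$ is invertible and (ii) $\rho(\mu\hat A)<1$. The genuinely new difficulty — and essentially the whole content of the theorem — is that $\hat A$ is neither symmetric nor equal to $\tilde A_{\textit{rw}}$, and, as the paper itself emphasises, its spectral radius may exceed $1$, so the bound $|\lambda|\le1$ from the first lemma does not apply directly and $\rho(\mu\hat A)<1$ must be argued from scratch under the hypothesis $\beta<\tfrac1n$. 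The key move is to route everything through the matrix that actually appears in the Case~2 optimization: differentiating the full objective gives $\bar X=(\tilde D+\alpha L-\beta L')^{-1}\tilde DX$, and using $L=\tilde D-\tilde A$ together with $L'=nI-\mathbf J_n-L$ (which follow from $A'+A=\mathbf J_n-I$ and $D'+D=(n-1)I$) one checks that $(1+\alpha+\beta)(I-\mu\hat A)=\tilde D^{-1}B$ with $B:=\tilde D+\alpha L-\beta L'$. Hence $I-\mu\hat A$ is invertible iff $B$ is.

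First I would prove $B\succ0$ under $\beta<\tfrac1n$. Rewriting $B=(\tilde D-\beta nI)+\beta\mathbf J_n+(\alpha+\beta)L$, the diagonal matrix $\tilde D-\beta nI$ is positive definite because $\tilde d_i=d_i+1\ge1>\beta n$; and $\beta\mathbf J_n\succeq0$, $(\alpha+\beta)L\succeq0$ in the intended regime $\beta\ge0$, $\alpha+\beta>0$ (the ``suitable $\beta$'' the paper alludes to). Therefore $B\succ0$, so $B$ and $I-\mu\hat A=\tfrac1{1+\alpha+\beta}\tilde D^{-1}B$ are invertible, which is the first assertion.

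Next I would bound $\rho(\mu\hat A)$. Conjugating $\mu\hat A=I-\tfrac1{1+\alpha+\beta}\tilde D^{-1}B$ by $\tilde D^{1/2}$ produces the \emph{symmetric} matrix $S:=I-\tfrac1{1+\alpha+\beta}\tilde D^{-1/2}B\tilde D^{-1/2}$, so $\rho(\mu\hat A)=\rho(S)=\|S\|_2$, and it is enough to show $0\prec\tilde D^{-1/2}B\tilde D^{-1/2}\prec2(1+\alpha+\beta)I$. The lower bound is exactly $B\succ0$. For the upper bound, expand $\tilde D^{-1/2}B\tilde D^{-1/2}=I+(\alpha+\beta)(I-\tilde A_{\textit{sym}})+\beta\,\tilde D^{-1/2}(\mathbf J_n-nI)\tilde D^{-1/2}$; the last term is $\preceq0$ since $\mathbf J_n-nI\preceq0$, and $\lambda_{\max}(I-\tilde A_{\textit{sym}})\le2$ by the first lemma, so by Weyl $\lambda_{\max}(\tilde D^{-1/2}B\tilde D^{-1/2})\le1+2(\alpha+\beta)<2(1+\alpha+\beta)$. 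Hence $\rho(\mu\hat A)<1$. (This also explains the side claim $\lambda>1$ for $\hat A$ used earlier: the \emph{smallest} eigenvalue $\eta$ of $\tilde D^{-1/2}B\tilde D^{-1/2}$ can drop below $1$, so the matching eigenvalue $\tfrac{1+\alpha+\beta-\eta}{\alpha+\beta}$ of $\hat A$ exceeds $1$.)

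With $\rho(\mu\hat A)<1$ in hand the remaining steps are routine and identical to Theorem~\ref{theorem1}: $(\mu\hat A)^k\to0$ and $\sum_{i=0}^{k-1}(\mu\hat A)^i\to(I-\mu\hat A)^{-1}$, so the unrolled recursion converges to $(1-\mu)(I-\mu\hat A)^{-1}H$. The hard part is the spectral estimate of the third step — controlling a non-symmetric iteration matrix whose top eigenvalue genuinely lies above $1$ — and the thing that makes it manageable is the passage to the symmetric similar matrix $S$ together with the two-sided bound on $\tilde D^{-1/2}B\tilde D^{-1/2}$; the hypothesis $\beta<\tfrac1n$ enters precisely as the condition that prevents $B$ from losing positive definiteness.
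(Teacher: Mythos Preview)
Your argument is correct and in fact more complete than the paper's, but it takes a different route. The paper proves only invertibility, by a direct eigenvalue estimate on the decomposition $\hat A=\tilde A+M$ with $M=\tfrac{\beta}{\alpha+\beta}\tilde D^{-1}(nI-\mathbf J_n)$: it asserts $\lambda_{\max}(M)\le\tfrac{\beta n}{\alpha+\beta}$, hence $\lambda_{\max}(\hat A)\le 1+\tfrac{\beta n}{\alpha+\beta}$, and checks that $\tfrac1\mu=1+\tfrac1{\alpha+\beta}$ exceeds this when $\beta<\tfrac1n$, so $\tfrac1\mu$ is not an eigenvalue of $\hat A$. The convergence of the iteration is then declared to follow ``the similar procedure of case~1'' and is omitted. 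Your route instead passes through the symmetric matrix $B=\tilde D+\alpha L-\beta L'$ coming from the Case~2 first-order condition, shows $B\succ0$ directly from $\tilde d_i\ge1>\beta n$, and then exploits the similarity $\mu\hat A\sim S=I-\tfrac1{1+\alpha+\beta}\tilde D^{-1/2}B\tilde D^{-1/2}$ to get a \emph{two-sided} spectral bound and hence $\rho(\mu\hat A)<1$. What this buys you is a rigorous convergence proof: the paper's one-sided inequality $\lambda<\tfrac1\mu$ does not by itself exclude eigenvalues $\le-\tfrac1\mu$, so the step it omits is not actually covered by the Theorem~\ref{theorem1} template. One small caveat worth flagging: your identity $(1+\alpha+\beta)(I-\mu\hat A)=\tilde D^{-1}B$ is specific to $\tilde A=\tilde A_{\textit{rw}}$ (the version that actually arises from the optimization and matches the paper's $Q$); it does not hold verbatim for $\tilde A=\tilde A_{\textit{sym}}$.
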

\begin{proof}
	Let $M=\frac{\beta n\tilde D^{-1}-\beta \tilde D^{-1} \mathbf J_n}{\alpha+\beta}=\frac{\beta \tilde D^{-1} }{\alpha+\beta}(nI-J_n)$. Note that $(nI-J_n)$ has the largest eigenvalue $n$. Suppose that $\lambda$ is the eigenvalue of $M$, we have $\lambda \leq \frac{\beta n}{\alpha+\beta} $. Then eigenvalue of $ \hat{A}$ is less than $1+\frac{\beta n}{\alpha+\beta}$. 
	 $(I_n-\mu \hat{A})$ is invertible iff $\frac{1}{\mu}$ is not an eigenvalue of $ \hat{A}$. Note that $\frac{1}{\mu}=\frac{1+\alpha+\beta}{\alpha+\beta}=1+\frac{1}{\alpha+\beta}$, when $\beta<\frac{1}{n}$ we have $\frac{1}{\mu}>1+\frac{\beta n}{\alpha+\beta}$, hence $\frac{1}{\mu}$ cannot be an eigenvalue of $\hat{A}$ and $(I_n-\mu \hat{A})$ is invertible. The proof of the iterative scheme follows the similar procedure of case 1 with a slight difference, as it is trivial, we omit the proof.
\end{proof}

\subsection{Why GCN+ relieve the over-smoothing?}
We have no assumptions on the specific transformation from $X$ to $\bar{X}$. In our implementation, the mathematical expression of GCN+ is defined as 
\begin{equation} \label{gcn+}
\begin{aligned}
Z^{(0)}&=H=\sigma(XW_1), \\
Z^{(k)}&=\mu\hat{A}Z^{(k-1)}+ (1-\mu)H,\\
X_{out}&=\text{softmax}(Z^{(k)}W_2),\\
\end{aligned}
\end{equation} 
where $W_1\in \mathbb{R}^{d \times m}$ and $W_2\in \mathbb{R}^{m \times c}$ are learnable weight matrices, $k$ is the dimension of the hidden layers. 

%The original formula is 
%\begin{equation} 
%\begin{aligned}
%X_{out}&=\text{softmax}((1-\mu)(I_n-\mu\hat{A})^{-1}\sigma(XW_1)W_2),\\
%\end{aligned}
%\end{equation} 

We interpret the anti-oversmoothing of GCN+ from two ways. First, note that in the power iterative scheme, a fraction of initial node features $H$ is always preserved in each iteration, which can be viewed as a flexible version of residual connection. In addition, we can also understand GCN+ from the frequency of graph signal. In Section \ref{GRO} , we have shown that the original GCN is corresponding to the first-order Taylor approximation of the optimization solution, that means we lost the high frequency part of the signal which contains the high-order information. Actually we omit the error series when we approximate the GCN.

%Recall a previous deep model named JKNet, the core formula of JKNet are as follow:
%\begin{equation} 
%Z=\text{Aggr}(Z^{(1)}, ...,Z^{(k)}) \\
%\end{equation} 
%where \text{Aggr} includes \textit{Concatenation}, \textit{Max-pooling} and \textit{LSTM-attention}.

Recall the current representative methods: DAGNN and JKNet, which shows promising improvement than the original GCN. The core formulas of them are as follows:
\begin{equation} 
\nonumber
\begin{aligned}
\text{DAGNN:}&\\
&Z=\text{stack}(H, Z^{(1)}, ..., Z^{(k)}), \quad Z^{(k)}=\hat{A}^{(k)}H,\\
\text{JKNet:\quad}&\\
&Z=\text{Aggr}(Z^{(1)}, ...,Z^{(k)}), \\
\end{aligned}
\end{equation} 
where \text{Aggr} includes \textit{Concatenation}, \textit{Max-pooling} and \textit{LSTM-attention}.

Actually, DAGNN and JKNet both make use of the information which from the immediate and high-order neighbors while GCN+ also benefit from this. Moreover, we provide the theoretical and empirical evidence of GCN+.

\subsection{Parameters Amount}
It is worth noting that the power iterative schemes are parameter-free in two versions of GCN+,  which is similar to APPNP \cite{klicpera2018predict}. In particular,  GCN+ ($\beta=0$) adopts the same scheme as APPNP, where we re-implement it and achieve more impressive results. 
%We will show that GCN+ has fewer parameter layers and higher performance in experiments. Especially, we compare GCN+ with two competitive methods and show the efficiency of our method.
\section{Experiments}
In this section, we evaluate the performance of GCN+ on several benchmark datasets against various graph neural networks on semi-supersized node classification tasks.
	
\subsection{Experimental Setup}
\subsubsection{Datasets}
We conduct extensive experiments on the node-level tasks on two kinds commonly used networks: Planetoid: \textit{Cora}, \textit{CiteSeer}, \textit{Pubmed} \cite{sen2008collective} and recent \textit{Open Graph Benchmark} (OGB) \cite{hu2020open}:\textit{ogb-arxiv}, \textit{ogb-proteins}. The statistics of datasets are summarized in Table \ref{dataset}.  It is worth nothing that OGB includes enormous challenging and large-scale datasets than Planetoid. We refer readers to \cite{hu2020open} for more details about OGB datasets.
\begin{table} 
	\small
	\setlength{\tabcolsep}{1mm}{
	\begin{tabular}{{cccccc}}
		\toprule
		Dataset & Nodes  & Edges    & Classes & Features  &Metric\\ 
		\midrule
		Cora  & 2708     & 5429      &  7   &  1433 & Accuracy \\ 
		Citeseer  & 3327     & 4732      &  7   &  2703  & Accuracy \\ 
		Pubmed  & 19717     & 44338      &  3   &  500  & Accuracy \\ 
		ogb-arxiv  & 169343     & 1166243      &  40   &  128  & Accuracy \\ 
		ogb-proteins & 132534     & 39561252      &  112   &  8 & ROC-AUC  \\ 
		\bottomrule
	\end{tabular}}
\caption{Dataset statistics.} 
\label{dataset}
\end{table}
%\subsubsection{Models}
%We compare ChebNet \cite{defferrard2016convolutional}, GCN \cite{kipf2016semi}, GAT \cite{velivckovic2017graph}, APPNP \cite{klicpera2018predict} with GCN+. We also include several recent models which aims to tackle with over-smoothing issue: DropEdge \cite{rong2019dropedge}, PairNorm \cite{zhao2019pairnorm}.
\subsubsection{Implementations}
%For DAGNN, we tune the $k$ in \{5, 10, 20\}
We choose the optimizer and hyperparameters of GNN models as follows. We use the Adam optimizer \cite{kingma2014adam} to train all the GNN models with a maximum of 1500 epochs. We set the number of hidden units to 64 on \textit{Cora}, \textit{Citeseer} and \textit{Pubmed} , 256 on \textit{ogb-arxiv} and \textit{ogb-proteins}. For SGC, we vary number of layer in \{1, 2, ..., 10, 15, ..., 60\} and for GCN and GAT in \{2, 4, ..., 10, 15, ..., 30\}. For $\alpha$ in APPNP, we search it from \{0.1, 0.2, 0.3, 0.4, 0.5\}. For DAGNN and JKNet, we search layers from \{2, 3, ..., 10\}. For learning rate, we choose from \{0.001, 0.005, 0.01\}. For dropout rate, we choose from \{0.1, 0.2, 0.3, 0.4, 0.5\}. We perform a grid search to tune the hyperparameters for other models based on the accuracy on the validation set. We run each experiment 10 times and report the average.  
%For GCN+, We set $\alpha=9$ and $\beta=0.00005$ on \textit{Cora}, \textit{Citeseer} and \textit{Pubmed}, $\alpha=8$ and $\beta=0.00001$ for \textit{ogb-arxiv}, $\alpha=1$ and $\beta=0.00001$ for \textit{ogb-proteins} respectively.

In practice, we use Pytorch \cite{paszke2019pytorch} and Pytorch Geometric \cite{Fey/Lenssen/2019} for an efficient GPU-based implementation of GCN+.
All experiments in this study are conducted on NVIDIA TITAN RTX 24GB GPU.

\begin{table*}[t]
	\small
	\centering
	\setlength{\tabcolsep}{1.5mm}{
	\begin{tabular}{ccccccc}
		\toprule
		\multirow{2}{*}{model} & \multicolumn{2}{c}{\textit{Cora}} & \multicolumn{2}{c}{\textit{Citeseer}} & \multicolumn{2}{c}{\textit{Pubmed}}                             \\
		& Fixed      & Random      & Fixed        & Random        & Fixed        & Random \\
		\midrule
		MLP & $61.6\pm0.6$   & $59.8\pm2.4$& $61.0\pm1.0$  & $58.8\pm2.2$  & $74.2\pm0.7$   &  $70.1\pm2.4$    \\
		GCN\cite{kipf2016semi} & $81.3\pm0.8$   & $79.1\pm1.8$& $71.1\pm0.7$  & $68.2\pm1.6$  & $78.8\pm0.6$   &  $77.1\pm2.7$    \\
		 GAT\cite{velivckovic2017graph}  &$83.1\pm0.4$ &$80.8\pm1.6$ &$70.8\pm0.5$ &$68.9\pm1.7$ & $79.1\pm0.4$  &$77.8\pm2.1$   \\
		 SGC\cite{wu2019simplifying} & $81.1\pm0.5$   & $80.4\pm0.3$& $71.9\pm0.3$  & $71.8\pm0.3$  & $78.9\pm0.0$   &  $77.8\pm0.6$    \\
		 JKNet\cite{xu2018representation}& $80.7\pm0.9$ &$79.2\pm0.9$ &$70.1\pm0.6$&$68.3\pm1.8$ & $78.1\pm0.6$ & $77.9\pm0.9$    \\
		 APPNP\cite{klicpera2018predict} &$83.3\pm0.5$ &$81.9\pm1.4$ &$71.8\pm0.4$ &$69.8\pm1.7$ & $80.1\pm0.2$  &$79.5\pm2.2$     \\
		 DAGNN\cite{liu2020towards}  &$84.4\pm0.5$ &$\textbf{83.7}\pm\textbf{1.4}$ &$73.3\pm0.6$ &$71.2\pm1.4$ & $\textbf{80.5}\pm\textbf{0.5}$  &$80.1\pm1.7$     \\
%		 GCNII &$85.5\pm0.5$&$83.3\pm0.5$&  $73.4\pm0.6$  &$71.5\pm0.7$ &  $80.2\pm0.4$    &$73.18\pm0.78$    \\
%		 GCNII* &$85.3\pm0.2$ &$81.9\pm0.5$ &$73.2\pm0.8$ &$71.6\pm1.6$ & $80.3\pm0.4$  &$79.5\pm0$     \\
		\midrule
		 GCN+($\beta=0$)  &$85.2\pm0.5$ &$83.3\pm1.1$ &$73.3\pm0.5$ &$72.3\pm0.7$ & $80.4\pm0.6$  &$80.1\pm0.6$     \\
		 GCN+($\beta \ne 0$)   &$\textbf{85.6}\pm\textbf{0.4}$ &$83.6\pm1.3$ &$\textbf{73.5}\pm\textbf{0.4}$ &$\textbf{72.5}\pm\textbf{0.9}$ & $80.5\pm0.6$  &$\textbf{80.3}\pm\textbf{0.7}$     \\
		\bottomrule  
	\end{tabular}}
\caption{Summary of classification accuracy(\%) on Planetoid datasets of semi-supervised node classification.}
\label{Plantoid-semi}
\end{table*}

\begin{table}[t]
	\centering
	\setlength{\tabcolsep}{3.5mm}{
	\begin{tabular}{{ccccc}}
		\toprule
		Dataset & \textit{ogb-arxiv}  & \textit{ogb-proteins}  \\ 
		\midrule
		GCN  &  $71.74\pm0.29$           &  $72.51\pm0.35$     \\ 
		GraphSAGE  &  $71.49\pm0.25$          &  $77.68\pm0.20$      \\ 
%		DAGNN  &  $72.09\pm0.27$        &?  \\ 
%		GCNII  & $72.74\pm0.16$     &        &       \\ 
		\midrule
		GCN+($\beta=0$)  & $71.85\pm0.23$&$78.63\pm0.28$  \\ 
		GCN+($\beta \ne0$)  &$\textbf{71.95}\pm\textbf{0.28}$ & $\textbf{79.07}\pm\textbf{0.34}$  \\ 
		\bottomrule
	\end{tabular}}
	\caption{Summary of classification performance(\%) on OGB datasets. For  \textit{ogb-arxiv}, it indicates accuracy and for \textit{ogb-proteins}, it indicates ROC-AUC.} 
	\label{ogb-semi}
\end{table}

\begin{figure}
	\centering
	\includegraphics[width=0.9\columnwidth]{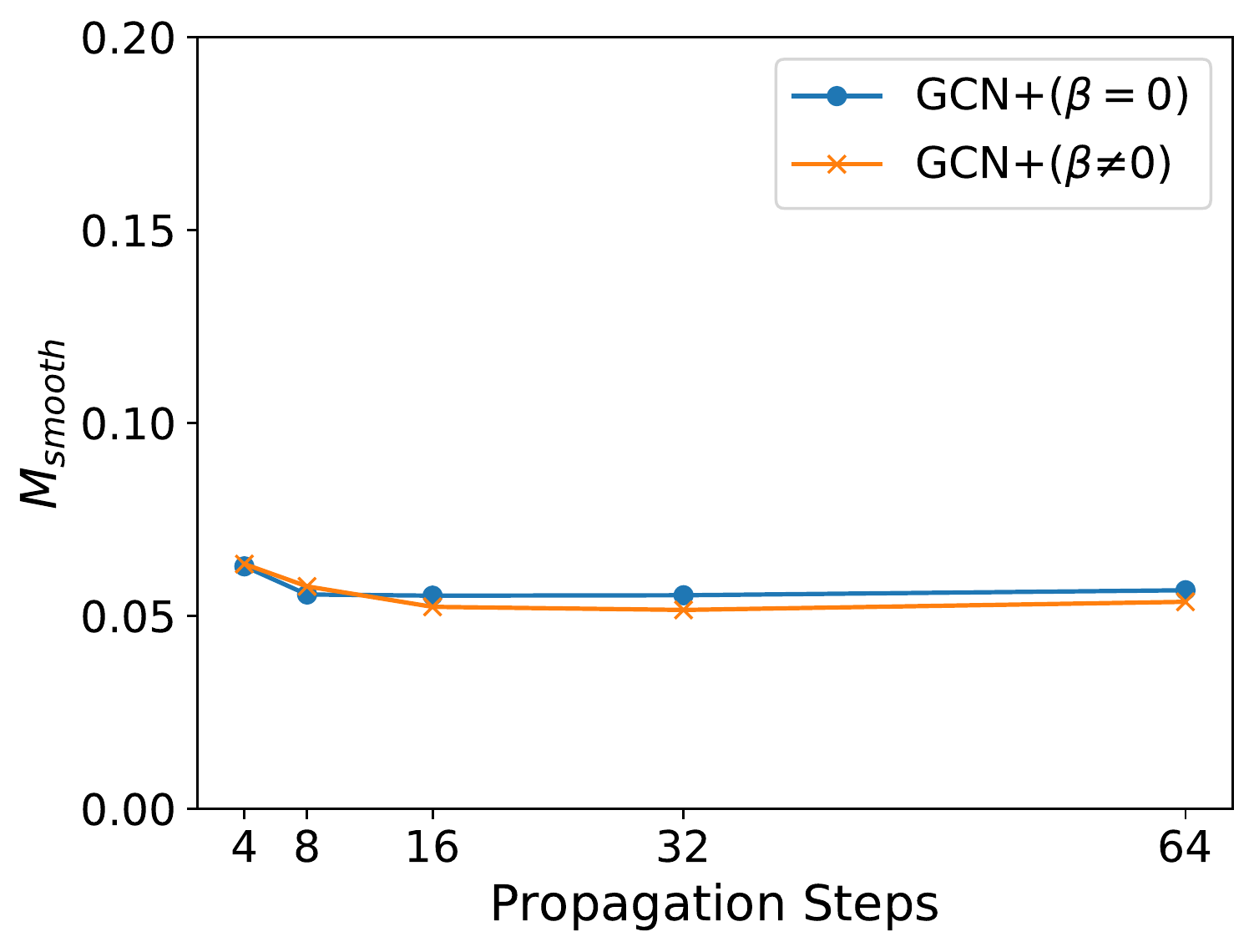} % Reduce the figure size so that it is slightly narrower than the column. Don't use precise values for figure width.This setup will avoid overfull boxes.
	\caption{$M_{\textit{\!non-smooth}}$ of GCN+ with increasing hops on \textit{Cora}.}
	\label{oversmooth}
\end{figure}
\begin{figure}[t]
	\centering
	\includegraphics[width=0.9\columnwidth]{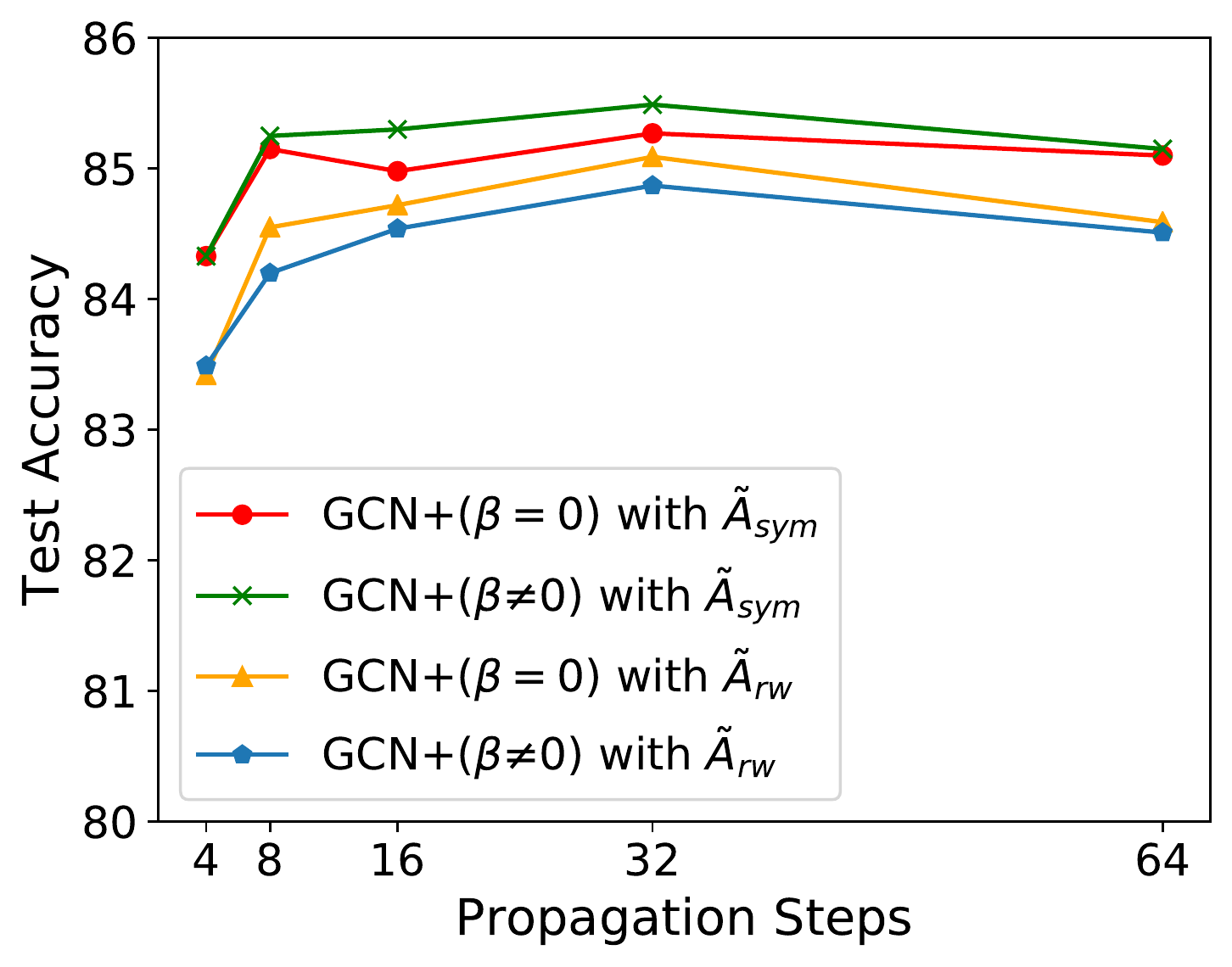} % Reduce the figure size so that it is slightly narrower than the column. Don't use precise values for figure width.This setup will avoid overfull boxes.
	\caption{Performance comparison of different propagation matrices $\tilde{A}_{sym}$ vs. $\tilde{A}_{rw}$ in GCN+ with increasing hops on \textit{Cora}.}
	\label{choice}
\end{figure}
\begin{figure*}[t]
	\centering
	\subfigure[$\alpha=9$]{
		\includegraphics[width=0.24\textwidth]{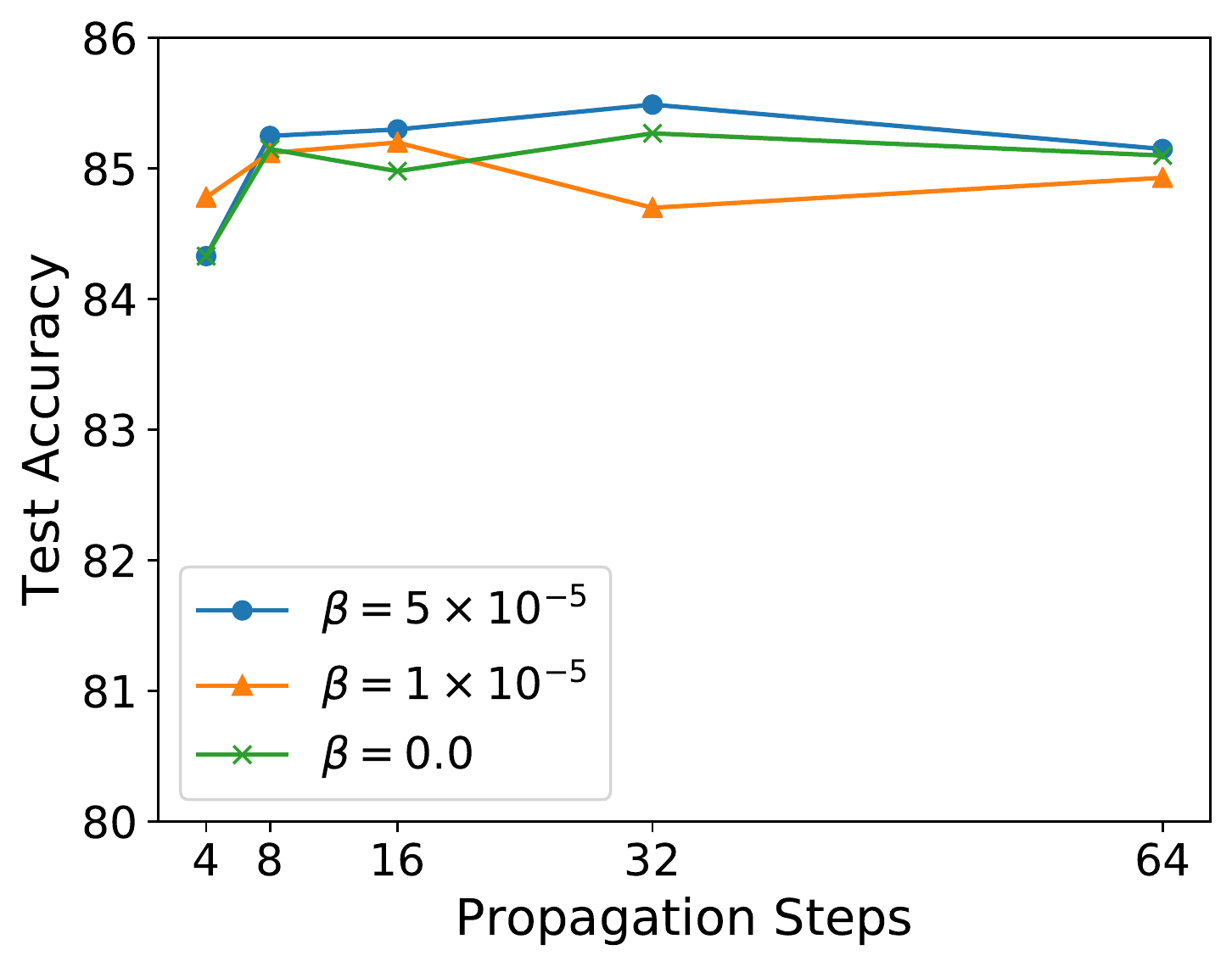}}\hspace{-1mm}
	\subfigure[$\alpha=4$]{
		\includegraphics[width=0.24\textwidth]{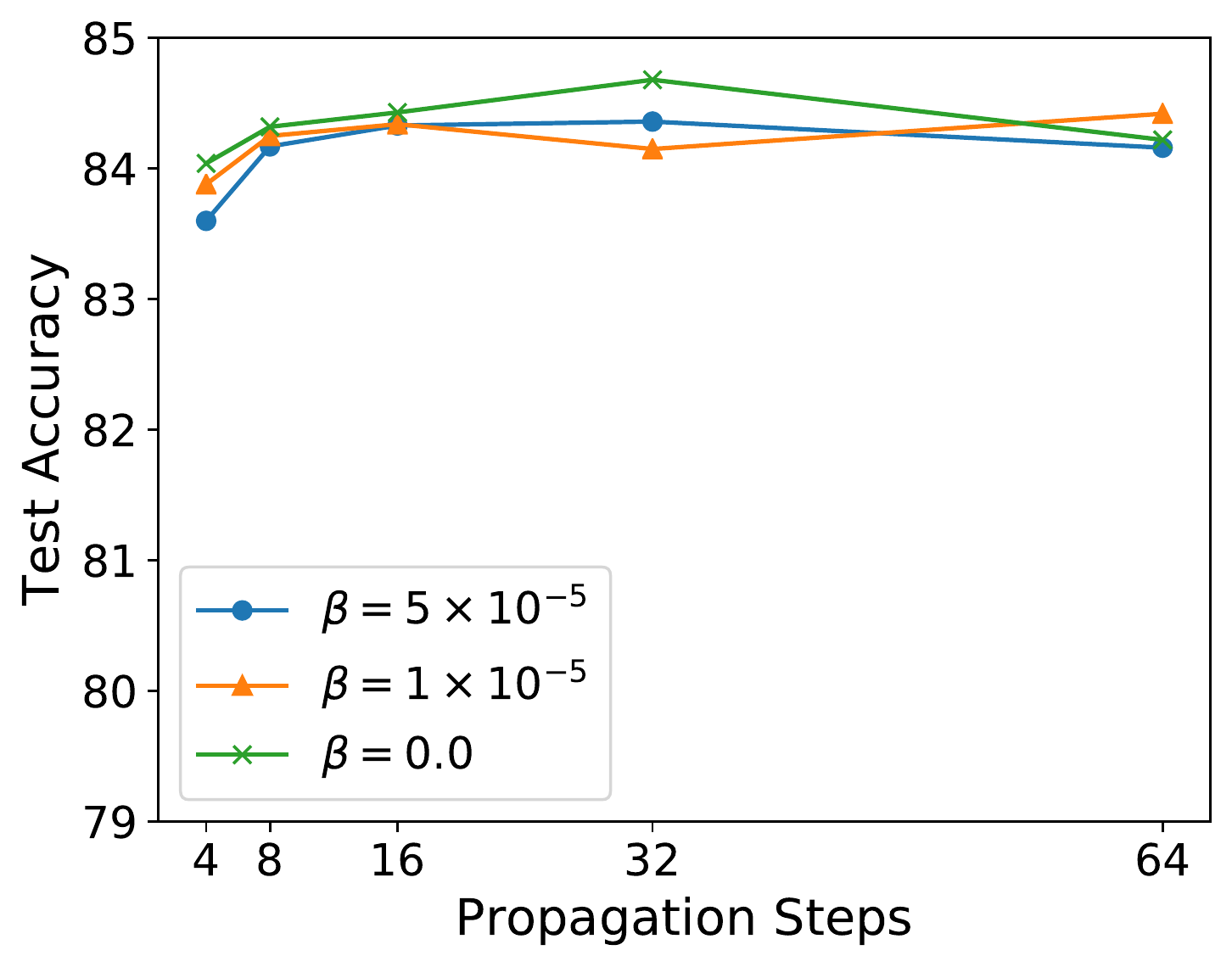}}\hspace{-1mm}
	\subfigure[$\alpha=2$]{
		\includegraphics[width=0.24\textwidth]{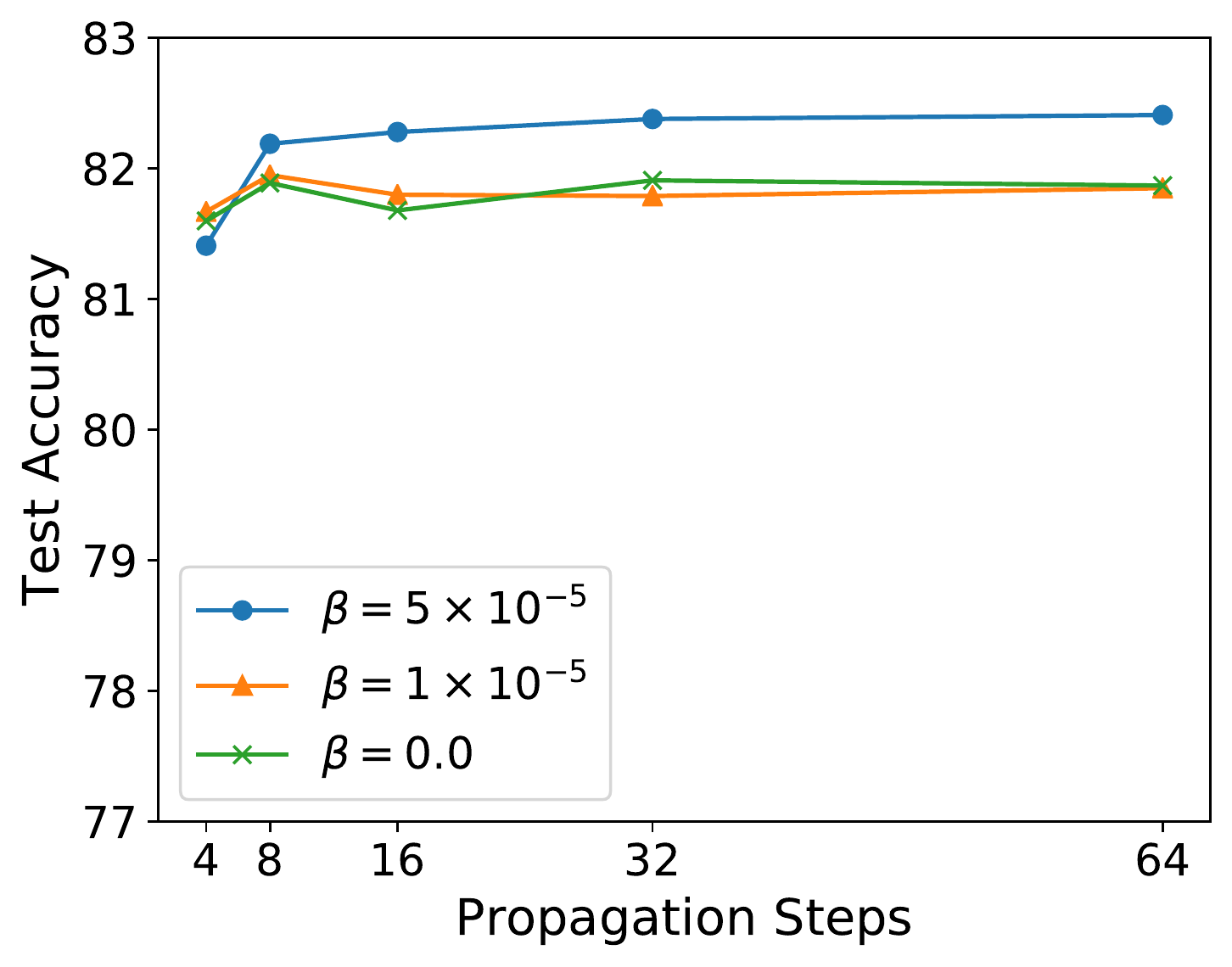}}\hspace{-1mm}
	\subfigure[$\alpha=1$]{
		\includegraphics[width=0.24\textwidth]{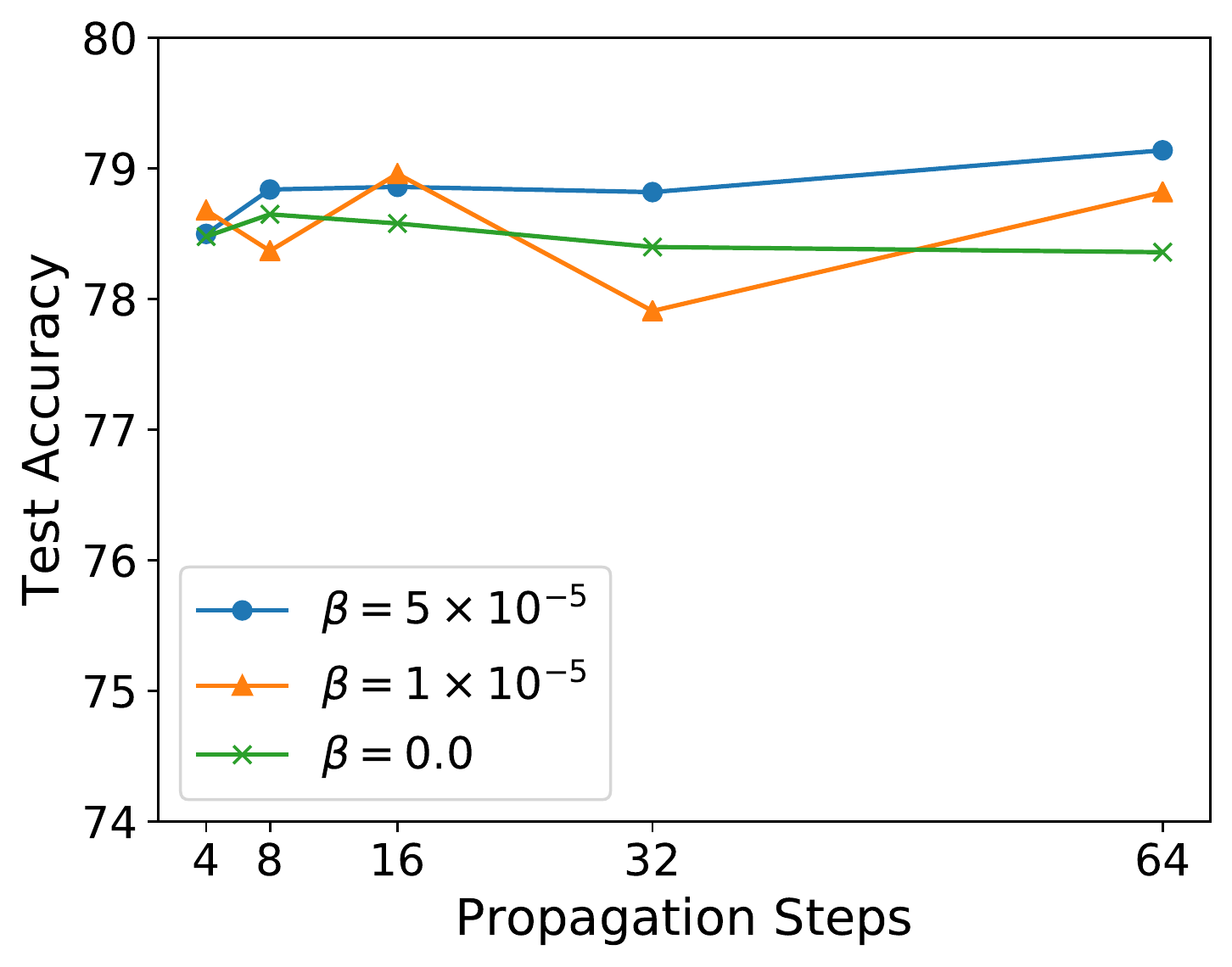}}\hspace{-1mm}
	\caption{Test accuracy of different propagation steps and $\alpha$ on \textit{Cora}.}
	\label{alpha and beta}
\end{figure*}

%\subsection{Semi-supervised Node Classification}

\subsection{Comparison with SOTA}
The evaluate metric of various datasets are listed in Table \ref{dataset}. Actually it is commonly used to evaluate the model by the community.
\subsubsection{Planetoid}
We use standard fixed and random training/validation/testing splits. Specifically, we use 20 labeled nodes per class as the training set, 500 nodes as the validation set, and 1000 nodes as the test set for all models. For fixed split, we follow the experimental setup in \cite{yang2016revisiting}.  We compare Multiplayer Perception (MLP) ,GCN \cite{kipf2016semi}, GAT \cite{velivckovic2017graph}, SGC \cite{wu2019simplifying}, DAGNN \cite{liu2020towards} and APPNP \cite{klicpera2018predict} with GCN+. Although DropEdge \cite{rong2019dropedge}, PairNorm \cite{zhao2019pairnorm} are proposed to tackle over-smoothing issue recently, our baseline methods don't include them as they do not help to boost the performance on node classification task. 
Table \ref{Plantoid-semi} compares the average test accuracy of 10 runs for each model on Planetoid dataset. As shown, GCN+ outperforms better than the representative baselines. Note that the shallow model APPNP achieves better performance than GCN and GAT. Recent deeper model named DAGNN shows competitive result and robustness on these datasets and GCN+ performs slightly better than it.
\subsubsection{OGB}
We adopt the setting of \cite{hu2020open}, which is more challenging and realistic. We consider the following representative models GCN \cite{kipf2016semi}, GraphSAGE \cite{hamilton2017inductive} and GCNII \cite{chen2020simple} as our baselines. In particular, we use the reported metric of the leaderboards of OGB team, which provide an open benchmark on several tasks and datasets. 

Table \ref{ogb-semi} compares the average test accuracy/ROC-AUC on OGB datasets. As shown, GCN+ outperforms the GCN and GraphSAGE. It is clear that our proposed GCN+ outperform SOTA in two middle scale datasets.

In summary, GCN+ achieves superior performance on several benchmarks, which shows that considering the information of high-order neighbors makes sense and we need more reasonable way to deepen GCNs or make use of the high-order neighbors. Note that GCN+ ($\beta \ne0$) is slightly better than GCN+ ($\beta =0$) which is benefit from the third term of Eq. (\ref{full-opt}). 
%\subsection{Full-supervised Node Classification}
%For Planetoid dataset, we randomly split all nodes into train/val/test as 60\%/20\%/20\% as \cite{chen2020simple} and keep the same split for all experiments. Table 4 report the mean classification accuracy of each model.
\subsection{Over-smoothing Analysis} \label{oversmooth-of-gcn+}
We employ the two proposed metrics to measure the node embeddings learned by GCN+. The results on \textit{Cora} are shown in Fig. \ref{oversmooth}. We can observe that as the number of hops increases, the $M_{\textit{smooth}}$ values nearly remains a small constant which is lower than vanilla deep GCN. This implies that GCN+ use the information of long-range neighbors and do not suffer from over-smoothing. 

Fig. \ref{model_vs} also compares the final output embeddings of GCN+ with multiple hops, which shows different behaviors with GCN. GCN+ relieves the over-smoothing and learns the meaningful embeddings with the increasing hops.

%\subsection{Propagation choice} In the previous section, we use $\tilde{A}$ to refer the $\tilde{A}_{sym}$ and $\tilde{A}_{rw}$. Here we compare the different choices of propagation matrix $\tilde{A}$. Fig \ref{choice} depicts the test accuracy achieved by varying the hops of different propagation matrices. The result illustrates that 
%$\tilde{A}_{sym}$ is slightly better than $\tilde{A}_{rw}$. 
\subsection{Hyperparameter Analysis}
In the previous sections, we use $\tilde{A}$ to refer the $\tilde{A}_{\textit{sym}}$ and $\tilde{A}_{\textit{rw}}$. Here we compare the different choices of propagation matrix $\tilde{A}$. Fig. \ref{choice} depicts the test accuracy achieved by varying the hops of different propagation matrices. The result illustrates that 
$\tilde{A}_{\textit{sym}}$ is slightly better than $\tilde{A}_{\textit{rw}}$. 

We consider three hyperparameter of GCN+, that is $\alpha$, $\beta$ and number of power iteration steps $k$. Fig. \ref{alpha and beta} compares the effect of these hyperparameters on \textit{Cora}. We can see that $k=16,32$ is suitable and more steps does not boost the performance significantly. For \textit{Cora}, when $\alpha=9$ (that means the fraction of retained initial node features is 0.1.), GCN+ achieve the best performance. The value of $\alpha$ varies by different datasets. More results and details listed in the supplementary material.

\section{Related Work}
\subsection{Graph Neural Networks}
Graph neural networks (GNNs) have been extensively studied for the past years. There are different views on designing new architecture, including the spectral-based, spatial-based and other types, such as understand the GNN using dynamic system \cite{xhonneux2019continuous}. Numerous methods are proposed to model the graph-structure data and apply on a wide range of applications. Besides the GCNs, there are also other types of GNNs, such as attention-based GNN \cite{velivckovic2017graph} which use multiple attention to aggregate information from neighbors, autoencoder-based GNN \cite{kipf2016variational}, which use a GCN encoder and decoder to learn meaningful embeddings, and dynamic GNNs \cite{seo2018structured, hajiramezanali2019variational, yan2020sgrnn} which learn the node embedding over time.
\subsection{Deep GCN and Over-smoothing}
Most GNNs are shallow models as deep architecture suffers from over-smoothing. Several studies explore deep GCNs. \citet{xu2018representation} introduce Jumping Knowledge Networks, which uses residual connection to combine the output of each layer. \citet{klicpera2018predict} use Personalized PageRank, which consider the information of root node to replace the graph convolution operator to solve the over-smoothing. DropEdge \cite{rong2019dropedge} suggests that randomly removing the edge of original graph impede over-smoothing. PairNorm \cite{zhao2019pairnorm} is another scheme which uses a normalization layer to scale the node features after the convolution layer. \citet{li2019deepgcns} build on ideas from ResNet to train very deep GCNs. \citet{li2020deepergcn} further propose MsgNorm, which boosts the performance on several datasets. \citet{yang2020revisiting} present NodeNorm to scale the node features. \cite{chen2020simple} propose a deep GCN models which use initial residual connection and identity mapping. 

A few work analyzes the cause and behaviors of over-smoothing theoretically. \citet{oono2019graph} investigate the asymptotic behaviors of GCNs as the layer size tends to infinity and reveals the information loss in deep GCNs. \citet{cai2020note} further extend analysis of \cite{oono2019graph} from linear GNNs to the nonlinear architecture.

%\cite{li2018deeper} suggest that the graph convolution of GCN is a special form of Laplacian smoothing and after repeatedly applying it many times, the output node features would converge to similar values. \cite{chen2020simple} analyze the spectral property of multi-layer GCN, which shows that after $k$ propagate, the final output of GCN converges to a vector which only carries the information of the degree of graph. 

\section{Conclusion}
We summarize the existing different views on the mechanism of GCNs, which help us understand and design the graph convolutional kernel. We further provide a general optimization framework named GCN+. Based on this framework, we derive two forms of GCN+ and propose two metrics to measure the smoothness of output node representations. Extensive empirical studies on several real-world datasets demonstrate that GCN+ compares favorably to state of the art with a small amount of parameters. For future work, we will consider different optimization objectives which encode the graph structure and node features adaptively. As we do not limit the transformation from $X$ to $\bar{X}$, another reasonable formulas can be further explored.
%\begin{figure}[t]
%	\centering
%	\includegraphics[width=0.9\columnwidth]{detections} % Reduce the figure size so that it is slightly narrower than the column. Don't use precise values for figure width.This setup will avoid overfull boxes.
%	\caption{Performance comparison of different propagation matrices $\tilde{A}_{sym}$ vs. $\tilde{A}_{rw}$ in GCN+ with increasing layer on Cora.}
%\end{figure}

\bibliography{refer.bib}

\end{document}